\definecolor{darkred}{rgb}{.7,0,0}
\definecolor{darkgreen}{rgb}{0,0.7,0}
\def\cG{{\mathcal G}}
\def\cH{{\mathcal H}}
\def\cO{{\mathcal O}}
\def\cP{{\mathcal P}}
\def\bR{{\mathbb R}}
\def\bE{{\mathbb E}}
\def\bP{{\mathbb P}}
\def\NPDF{{\mathcal N}}
\def\f0{{\mathbf 0}}
\def\KL{{\operatorname{KL}}}
\def\cO{\mathcal{O}}
\newcommand{\dd}{\mathsf{d}}
\newcommand{\sJ}{\mathsf{J}}
\newcommand{\sI}{\mathsf{I}}
\crefname{hypothesis}{Hypothesis}{Hypotheses}
\title{Efficient Prior Calibration From Indirect Data\thanks{
\funding{
% \textbf{Funding:}
MG is supported by a Royal Academy of Engineering Research
Chair and EPSRC grants EP/X037770/1, EP/Y028805/1, EP/W005816/1, EP/V056522/1 , EP/V056441/1, EP/T000414/1, and EP/R034710/1.
AMS is grateful for support through a Department of Defense Vannevar Bush Faculty Fellowship, from the Air Force Office of Scientific Research under MURI award number FA9550-20-1-0358 (Machine Learning and Physics-Based Modeling and Simulation) and by the SciAI Center, funded by the Office of Naval Research (ONR), under Grant Number N00014-23-1-2729.
AV is supported through the EPSRC ROSEHIPS grant  EP/W005816/1.
}
}}
\author{
O. Deniz Akyildiz\thanks{Department of Mathematics, Imperial College London, London, SW7 2AZ  UK (\email{deniz.akyildiz@imperial.ac.uk}).}
\and Mark Girolami\thanks{The  Alan  Turing  Institute, London, NW1 2DB UK,  and Department of Engineering, University of Cambridge, Cambridge, CB2 1PZ UK (\email{mag92@cam.ac.uk}).}
\and Andrew M. Stuart\thanks{Department of Computing and Mathematical Sciences, California Institute of Technology, Pasadena, CA 91125 USA (\email{astuart@caltech.edu}).}
\and Arnaud Vadeboncoeur\thanks{Department of Engineering, University of Cambridge,  Cambridge, CB2 1PZ UK (\email{av537@cam.ac.uk}). Corresponding author.}
}
\def \r{\mathbf r}
\def \I{\mathbf I}
\def \0{\mathbf 0}
\def\md{{\mathrm d}}
\def\KL{{\mathrm{KL}}}
\def\cZ{\mathcal{Z}}
\def\cH{\mathcal{H}}
\def\Sb{\mathbb S}
\def\b0{\mathbf 0}
\def \Rb {\mathbb R}
\def \Eb {\mathbb E}
\def \Sb{\mathbb S}
\def \B{\mathrm{B}}
\def \r{\mathsf r}
\def \I {\mathrm I}
\def\genbox#1#2#3#4#5#6{% #1=0/1, #2=color, #3=shape, #4=raise, #5=width, #6=width/2
    \leavevmode\raise#4bp\hbox to#5bp{\vrule height#5bp depth0bp width0bp
    \pdfliteral{q .5 w \csname #2COLOR\endcsname\space RG
                       \csname #3PDF\endcsname{#5}{#6} S Q
             \ifx1#1 q \csname #2COLOR\endcsname\space rg 
                       \csname #3PDF\endcsname{#5}{#6} f Q\fi}\hss}}
\def\diabox     #1#2{\genbox{#1}{#2}  {dia}      {-.5} {6}    {3}}
\definecolor{darkmagenta}{rgb}{0.55, 0.0, 0.55}
\begin{document}

\maketitle

% REQUIRED
\begin{abstract}
Bayesian inversion is central to the quantification of uncertainty
within problems arising from numerous applications in science and engineering. 
To formulate the approach, four ingredients are required:  a \emph{forward model} 
mapping the unknown parameter to an element of a solution space, often the solution
space for a differential equation; an \emph{observation operator} mapping an element 
of the solution space to the data space; a \emph{noise model} describing how noise pollutes 
the observations; and a \emph{prior model} describing
knowledge about the unknown parameter before the data is acquired. This paper is concerned 
with learning the prior model from data; in particular, learning the prior from multiple 
realizations of indirect data obtained through the noisy observation process.
The prior is represented, using a generative model, as the pushforward of a Gaussian in a latent
space; the pushforward map is learned by minimizing an appropriate loss function.
A metric that is well-defined under empirical approximation is used to 
define the loss function for the pushforward map to make an implementable
methodology. Furthermore, an efficient residual-based
neural operator approximation of the forward model is proposed and it is shown that
this may be learned concurrently with the pushforward map, using a bilevel optimization 
formulation of the problem; this use of neural operator approximation has the potential to make prior learning from indirect data more computationally efficient,
especially when the observation process is expensive, non-smooth or not known.
The ideas are illustrated with the Darcy flow inverse problem of finding permeability from piezometric head measurements.
\end{abstract}

% REQUIRED
\begin{keywords}
inverse problems, generative models, prior learning, operator learning, differential equations
\end{keywords}

% REQUIRED
\begin{MSCcodes}
68T37, 65N30, 62F15, 35R30
% 68T37 - Reasoning under uncertainty in the context of artificial intelligence
% 65N30 - Finite element, Rayleigh-Ritz and Galerkin methods for boundary value problems involving PDEs
% 62F15 - Bayesian inference
% 35R30 - Inverse problems for PDEs
\end{MSCcodes}

% \clearpage
\section{Introduction}
\label{sec:I}

\subsection{Set-Up}
\label{ssec:SU}
This paper is concerned with learning a generative model for unobserved $\{z^{(n)}\}_{n=1}^N$ from
indirect and noisy data $\{y^{(n)}\}_{n=1}^N$ given by
\begin{align}
\label{eq:1}
y^{(n)} = \cG(z^{(n)}) + \varepsilon^{(n)}_\eta, 
\end{align}
where the noise $\varepsilon^{(n)}_\eta \sim \eta$ i.i.d.\,.
The setting where $\cG(\cdot)$ is the identity and the noise is zero is the standard problem
of generative modeling and is well studied; in the context of this paper, Bayesian inversion,
such a generative model may be used to construct a prior measure from prior samples $\{z^{(n)}\}_{n=1}^N$. However, in many science and engineering applications the prior samples $\{z^{(n)}\}_{n=1}^N$ are not directly
observed, but the  $\{y^{(n)}\}_{n=1}^N$, arising from \eqref{eq:1} with choice $\cG=g \circ F^\dagger$ originating from multiple instantiations of physical systems
are available. Thus, we are interested in the setting where
\begin{align}
\label{eq:2}
y^{(n)} = g \circ F^\dagger(z^{(n)}) + \varepsilon^{(n)}_\eta.
\end{align}
Here, $F^\dagger: Z \mapsto U$ maps between function spaces $Z,U$, representing a 
PDE parameter-to-solution operator, and $g: U \mapsto \bR^{d_y}$ is a solution-to-data map. From the resulting finite dimensional data we wish to construct a generative model for a prior measure on a function space, giving rise to the unobserved prior samples 
$\{z^{(n)}\}_{n=1}^N$.

To overview our approach to this problem we first describe it at a population loss level.
Let $\nu$ denote the law of the $y^{(n)}$ and $\mu$ the desired law of the $z^{(n)}$. Letting
$*$ denote convolution of measures and $\#$ denote pushforward, introducing divergence $\dd_1$
between probability measures in data space and $\cH(\cdot)$ the regularization term on measures 
in the input space, we define population loss
\begin{align}\label{eq:ideal_loss}
\sJ_1(\mu) = \dd_1\Bigl(\nu, \eta * (g \circ F^\dagger)_\# \mu \Bigr) + {\cH}(\mu). \tag{{Functional 1}}
\end{align}
To develop algorithms to find $\mu$ we will represent $\mu$ as pushforward under map $T^\alpha$
of Gaussian measure $\mu_0$ on a latent space.\footnote{Other generative models replacing the Gaussian with different,
but also straightforward to sample, measures can easily be accommodated; we choose a Gaussian in the latent
space to make the presentation explicit.} Here $\alpha \in \bR^{d_\alpha}$ represents a finite dimensional parameterization
of the pushforward from $\mu_0$ to $\mu$. We replace the regularization term $\cH(\cdot)$ on measure $\mu$ 
by regularization term $h: \bR^{d_\alpha} \to \bR$ on $\alpha.$ We then consider the modified population loss
\begin{align}\label{eq:ideal_loss2}
\sJ_2(\alpha) = \dd_1\Bigl(\nu, \eta * (g \circ F^\dagger \circ T^\alpha)_\# \mu_0\Bigr) + h(\alpha). \tag{{Functional 2}}
\end{align}
We also observe that $F^\dagger$ may be expensive to compute and it may be desirable to replace it
by a neural operator $F^\phi$ whose parameters $\phi \in \bR^{d_\phi}$ need to be learned so that
$F^\phi \approx F^\dagger$ in sets of high probability under $\mu.$ But we do not know $\mu$, indeed we
are trying to find it; thus the optimal parameters $\phi$ will depend on $\alpha$ and
be defined by $\phi=\phi^*(\alpha)$. We thus introduce loss function
\begin{align}\label{eq:ideal_loss3}
\sJ_3(\alpha) = \dd_1\Bigl(\nu, \eta * (g \circ F^{\phi^*(\alpha)} \circ T^\alpha)_\# \mu_0\Bigr) + h(\alpha). \tag{{Functional 3}}
\end{align}
At the heart of all these loss functions is a matching of distributions.
In practice both $\nu$ and the pushforward of $\mu_0$ will only be available empirically and so it is necessary that the divergence $\dd_1$ can be readily evaluated on empirical measures. Empiricalized versions of the functionals $\sJ_2, \sJ_3$ will form the basis of computational methodology proposed in this paper.  
The mapping $\phi^*(\alpha)$ will also be
learned using minimization of a loss function, involving matching of
distributions and evaluated empirically. The functional $\sJ_1$ provides a theoretical underpinning of our approach; and in the case $N=1$
will be linked, in the empirical setting, to Bayes Theorem.
The efficiency of the proposed methodology is due to: (i) the replacement of costly PDE simulations with evaluations of a concurrently learned surrogate model trained through readily computable PDE residuals; (ii) as will be mentioned in Remark~\ref{rem:scalingVBayes}, the proposed method scales more favorably than alternative Bayesian approaches due to the challenges of high dimensional posterior sampling. 

Subsection \ref{ssec:C}, which follows, summarizes our contributions and
outlines the paper. In Subsection \ref{ssec:LR} we review relevant literature in the area and in Subsection \ref{ssec:N} we overview the notation and model problem (Darcy) used in the paper.

\subsection{Contributions and Outline}
\label{ssec:C}
The proposed novel methodology allows the construction of a calibrated measure over the parameters underlying a PDE model, given data from a \textit{collection} of physical systems. Furthermore, the methodology may be combined with a novel concurrent neural operator approximation of the PDE. Together these ideas hold the potential to improve the accuracy and efficiency of Bayesian inversion and of generative modeling for physical systems. The work can be broken down into five primary contributions:
\begin{enumerate}
    \item We introduce a suitable choice for divergence $\dd_1$ based on sliced Wasserstein-2 distance
    and demonstrate that it leads to computationally feasible objective $\sJ_2$  \eqref{eq:ideal_loss2}.
    \item We introduce a residual-based probabilistic loss function to define choice of parameters
    $\phi^*(\alpha)$ in the neural operator approximation.
    \item  With this definition of $\phi^*(\alpha)$ we demonstrate a computationally feasible objective $\sJ_3$  \eqref{eq:ideal_loss3}.
    \item We show that, with our choice of $\dd_1$, minimization of
    \ref{eq:ideal_loss} may be linked to Bayes Theorem when $N=1$.
    \item In order to be concrete we describe our methodology in the context of the Darcy flow
    model of porous medium flow which may be viewed as a mapping from the permeability field ($z$) to 
    linear functionals of the piezometric head ($y$). Numerical experiments with Darcy flow, for two different choices of pushforward families $T^\alpha$,
    are used to demonstrate feasibility and consistency of the proposed methodology.
\end{enumerate}

In Section \ref{sec:R} we describe the efficient residual-based approach to operator learning that we
adopt in this paper, addressing contribution 2. Section \ref{sec:M} introduces specific divergences for definition of Functionals 1--3, and for the residual-based learning, addressing contributions 1 and 3;
we also address contribution 4 in Theorem \ref{t:T}.
Section \ref{sec:A} discusses algorithmic details, giving further detail on contributions 1 and 3 whilst
Section \ref{sec:N} implements the algorithms on the Darcy flow problem, contribution 5. 
We conclude and discuss future works in Section \ref{sec:C}.

\subsection{Literature Review}
\label{ssec:LR}
We overview relevant literature, First, we discuss the learning of priors from data.
Secondly we describe the surrogate modeling literature. And finally, we discuss related transport-based inference methods.

\subsubsection{Learning Priors}
The task of selecting \textit{a best prior} has received much attention in the Bayesian statistics community, with many objectives and ideals in mind~\cite{jaynes1968prior}. Certain efforts concentrate on the careful formulation of uninformative priors~\cite{jeffreys1946invariant, bernardo1979reference, bernardo2009bayesian}. Others focus on mathematical tractability through conjugacy~\cite{raiffa2000applied} or eliciting priors from domain experts~\cite{o2019expert}. Some see the prior as an opportunity to share information from observations with possibly different underlying parameters, that are assumed to be drawn from the same distribution. Taking this point of view, are methods related to hierarchical Bayes~\cite{gelman1995bayesian}, empirical Bayes~\cite{robbins1964empirical, robbins1992empirical} and  parametric empirical Bayes~\cite{casella1985introduction, morris1983parametric}. In many ways, the idea of using a set of data to explicitly target an unconditional distribution is the basis of many modern generative modeling methods in the field of machine learning (ML). These include score-based models~\cite{song2020score}, diffusion models~\cite{song2019generative, ho2020denoising}, 
variational auto-encoders~\cite{kingma2013auto}, energy-based models ~\cite{song2021train, teh2003energy}, normalizing flows~\cite{papamakarios2021normalizing}, gradient-flows~\cite{bonnotte2013unidimensional}, and more.  Data-based unconditional distributions like these have been shown to be not only expressive generative models, but also powerful priors for Bayesian inversion~\cite{feng2023score, boys2023tweedie}. 
These ideas appear in a number of recent papers focussed on inverse-problems
\cite{patel2021gan,asim2020invertible,arridge2019solving}. A key difference in our work and the basis for our contribution is that, while above works focus on learning (and using) priors from which unconditional samples are available, our methodology here extends this idea to learn priors from indirect data. This idea of indirect knowledge of priors and data
is exploited in \cite{daras2024ambient} and also in \cite{gao2023image} for linear operators
where the set of Bayesian inverse problems defined individually by
\eqref{eq:1}, for each $n$, is solved in the case where $\cG(\cdot)=A\cdot$ for some linear
operator $A$; the collection of inverse problems is used to both learn prior information and learn the
dependence of each individual posterior on data.

\subsubsection{Surrogate Modeling and Operator Learning}
In many engineering applications related to design optimization, parameter inversion, and forward uncertainty quantification tasks, it is necessary to evaluate numerical models many times. 
When using a classical numerical scheme, there is no information carry-over from one numerical solve operation to the next.
Hence, there is room for improvement in the form of somehow interpolating information from the one numerical solution to the next. The idea of replacing computer code with a cheap-to-evaluate statistical interpolation model is well explored~\cite{sacks1989design,kennedy2001bayesian}. Such methods have found their place as viable model order reduction techniques for multi-query problems.
Modern advances have now begun to pose the task of surrogate modeling directly in function-space, resulting in the field of operator learning~\cite{bhattacharya2021model,lu2021learning, kovachki2023neural, li2020fourier}. These methods are based on gathering input-output datasets of PDE parameters and PDE solutions obtain via classical numerical schemes, such as finite element models (FEM) and spectral methods. 
Physics informed surrogate modeling attempts to directly incorporate PDE information into the learning task~\cite{rixner2021probabilistic, zhu2019physics, li2021physics}. The methods may then be data-less or semi-data-informed. 
Like PINO \cite{li2021physics} our optimization objective balances a data-driven loss with a PDE-enforcing regularizer. Unlike PINO, however, our data-driven loss is not related to operator learning; operator learning is introduced purely through the PDE-enforcing regularizer.  Also, unlike PINO, which adds the loss and the regularizer, we adopt a bilevel optimization strategy because of issues related to balancing the 
two terms. 
Finally  we use the variational form of the PDE to define the regularizer, demonstrating that the FNO operator interacts well with FEM-interpolation to enable derivative-based optimization.
Certain classes of methods also pose prior distributions over PDE parameters and attempt to learn a surrogate trained on random draws from that prior~\cite{vadeboncoeur2023fully, vadeboncoeur2023random}.

\subsubsection{Pushforwards and Wasserstein Losses}
Minimization of regularized loss functions over the space of probability measures, as exemplified by \ref{eq:ideal_loss},
is central to modern computational statistics and machine learning: it lies at the heart of
variational inference \cite{wainwright2008graphical} and in many other emerging inference
problems for probability measures \cite{crucinio2024solving,liutkus2019sliced, li2024differential, li2024stochastic, li2025inverse}.
In statistical inference and ML, there are many learning objective functions to make use of. A well known method is maximizing the marginal likelihood. However, the task we are interested in for this work is distributional learning. Hence, we must use a statistical divergence between measures. A common choice is the KL divergence. However this is not useful when both measures being compared are empirical, due to the non-overlapping support of sampled Diracs, as is the case in this paper (the $D_\KL$ is either $0$ or $\infty$).

Noting this restriction on suitable divergences we proceed to identify appropriate choices.
The (MMD)   maximum mean divergence~\cite{smola2007hilbert, dziugaite2015training} and 
(closely related) energy distances \cite{szekely2013energy, sejdinovic2013equivalence} are one possible class of metrics that could be used.
In this work, we focus on the use of computationally tractable optimal transport based metrics on the space of measures; in particular we use the sliced-Wasserstein metric~\cite{bonneel2015sliced}. Many other works have explored the use of these optimal transport (OT) based metrics for ML inference tasks~\cite{kolouri2019generalized, deshpande2019max, nadjahi2021sliced, nguyen2024energy,liutkus2019sliced}. There is also work on combining pushforward measures and Bayesian inference in~\cite{butler2018combining, butler2017consistent, marzouk2016introduction, el2012bayesian}. Solving inverse problems with Wasserstein loss is also a partially explored topic in~\cite{adler2017learning} and conditional flow matching~\cite{chemseddine2024conditional}. Some works have also looked at Wasserstein metrics between pushforward measures~\cite{sagiv2020wasserstein}. Wasserstein and sliced-Wasserstein metrics have found use in approximate Bayesian computation (ABC)~\cite{tavare1997inferring, beaumont2002approximate, nadjahi2020approximate} where the recovered measure have been shown to converge to the Bayesian posterior. Parameter estimation with Wasserstein metrics in a purer form is explored in~\cite{bernton2019parameter}.
In discussing OT based learning objective, it is important to mention entropy regularization, as in the Sinkhorn algorithm~\cite{cuturi2013sinkhorn}. In these works, entropy regularization is put on the finite-dimensional observational space. We will see that entropy regularization comes up differently in our proposed methodology.

\subsection{Notation}
\label{ssec:N}
Let $D \subset \mathbb{R}^d$ be bounded and open. We denote the boundary of the set $D$ by $\partial D$.
We use the $L^p(D)$ classes of $p-$th power Lebesgue integrable functions, $1 \le p < \infty$, extending
to $p=\infty$ in the usual way via the essential supremum. 
We denote by $C^\infty(D)$ the set of infinitely differentiable functions, and by $H^1_0(D)$ the Sobolev space of functions with one square-integrable weak derivative and homogeneous Dirichlet boundary conditions; we
denote by $H^{-1}(D)$ the dual of $H^1_0(D)$ with respect to the canonical pairing through Lebesgue integration over $D$.

Let $\cP(X)$ denote the space of probability measures on measurable space $X$.
Divergences on the space of probability measures are denoted by $\dd$, sometimes with subscript $i \in \{1,2\}.$  We denote by $f_\#\mu$ the pushforward measure given by $f_\#\mu(A) = \mu(f^{-1}(A))$ for all $\mu$ measurable sets $A$. We denote indexing over different instances of variables in a collection (such as a dataset or set of randomly sampled variables) with a superscript in parenthesis, accessing elements of a vector are done through a subscript in parenthesis, and incrementing (such as in summations) is done with a plain subscript. We denote by
$\langle \cdot, \cdot \rangle_A=\langle \cdot, A^{-1}\cdot \rangle$ the covariance weighted inner-product,
for any positive self-adjoint $A$, with induced norm $\|\cdot\|_{A}.$

Consider \eqref{eq:2} and assume that $z^{(n)} \sim \mu^\dagger \in \mathcal{P}(Z)$ where $Z \subset L^\infty(D)$ is separable. Let $F^\dagger: Z \to H^1_0(D)$ and let $g: H_0^1(D) \to \mathbb{R}^{d_y}$ denote a set of functionals. If we assume that $\varepsilon^{(n)}_\eta \sim \eta$ for $n = 1,\ldots,N$ are i.i.d noise variables then $y^{(n)} \sim \eta * (g \circ F^\dagger)_\# \mu^\dagger.$ The problem of interest is to recover from the observations $\{y^{(n)}\}_{n=1}^N$ the law $\mu^\dagger$ of the parameter field.

To be concrete we will work with inverse problems defined by the Darcy equation
\begin{subequations}
\label{eq:poisson_bc}  
\begin{align}
    \nabla\cdot(z\nabla u) + f &= 0, \quad \forall x \in  D,\\
    u &= 0, \quad \forall x \in \partial D.
\end{align}
\end{subequations}
Here $z$ denotes permeability and $u$ the piezometric head. The mapping $z \mapsto u$
may be viewed as mapping $F^\dagger: Z \to H^1_0(D),$ for appropriately defined $Z$. To be concrete
we will focus on this setting, and the problem of determining a prior on $z$ from noisy
linear functionals of $u$ defined by mapping $g: H^1_0(D) \to \bR^{d_y} \cong \bigl(H^{-1}(D)\bigr)^{d_y}.$ The reader will readily see that the ideas in the paper apply
more generally, and that consideration of the Darcy problem is simply for expository purposes.

\section{Residual-Based Neural Operator}
\label{sec:R}

This section is devoted to defining $\phi^*(\alpha)$ which appears in \ref{eq:ideal_loss3}.
Recall that $\phi$ are the parameters of the neural network surrogate PDE model, and that their optimization
depends on the underlying input measure on which the surrogate needs to be accurate; 
for this reason
they depend on $\alpha$, i.e.  the set of parameters characterizing the prior we want to learn.
To define function $\phi^*(\alpha)$ we proceed as follows. Let $F^\phi: Z \to H^1_0(D)$ be a parametric family of maps  approximating $F^\dagger$. Now define residual operator  $R: Z \times H_0^1(D) \to H^{-1}(D)$. Intuitively, we can express our PDE as $R(z, u) = 0.$
In the case of the Darcy equation \eqref{eq:poisson_bc} we may write
$$R(z, u) = \nabla \cdot (z \nabla u) + f,$$ and we have $ u \in H_0^1(D)$, $z\in Z \subset L^\infty(D)$, and $f\in L^2(D)$. Note next that we can write $u = F^\dagger(z)$ by the definition of forward map, hence we have
$R(z, F^\dagger(z)) = 0$ for all $z \in Z$. In order to incorporate this information into our loss functional, we define $R^\phi: Z \to H^{-1}(D)$ as
\begin{align}
R^\phi(z) = R(z, F^\phi(z)),
\end{align}
where $F^\phi$ is the parametric family of maps. In order to obtain a computable loss, we introduce a \textit{discretization operator} $\cO: H^{-1}(D) \to \Rb^{d_o}$, where $d_o$ is the dimension of the output of $\cO$. In particular, given a set of basis functions $\{v_i\}_{i=1}^{d_o}$ for $H_0^1(D)$, we can 
write\footnote{Here $\langle \cdot, \cdot \rangle$ denotes the duality pairing 
between $H^1_0(D)$ and $H^{-1}(D)$}
\begin{align}
\cO(R)_i = \langle v_i, R \rangle = \int_D v_i R(z, u)(x) \mathrm{d} x,
\end{align}
for $i = 1,\ldots,d_o$, for any given $v_i \in H_0^1(D)$, noting that integration by parts may
be used to to show well-definedness in the given function space setting. To compactly represent the discretization process with our emulator $F^\phi$, we define $\mathcal{O}^\phi: X \to \Rb^{d_o}$ as
\begin{align}
\cO^\phi(z) = \cO(R^\phi(z)).
\end{align}
We are now in a position to define our final loss functional using the constructions above. To learn the optimal parameter $\phi$ as $\alpha$ varies, we define the following coupled loss functional and associated
minimization problem:
 \begin{align} \label{eq:ideal_loss4} 
\sJ_4(\phi;\alpha) &= \dd_2\Bigl(\delta_0, (\cO^\phi \circ T^\alpha)_\#\mu_0)\Bigr)\tag{{Functional 4}},\\
 \phi^\star(\alpha) &= \underset{\phi}{\mathrm{arg min}}\;\sJ_4(\phi;\alpha); \nonumber
\end{align}
here  $\dd_2$ is a divergence term between probability measures on $\Rb^{d_o}$ and $\delta_0$ is the Dirac measure at zero. Minimizing $\sJ_4(\cdot;\alpha)$ for given $\alpha$ determines a residual-based approximation
of $F^\dagger$, accurate with respect to $(T^\alpha)_\#\mu_0$. 
Using this expression shows that \ref{eq:ideal_loss3} and \ref{eq:ideal_loss4} define
a bilevel optimization scheme~\cite{sinha2017review, holler2018bilevel}. This scheme is the heart of our proposed methodology. Using
the bilevel approach avoids balancing the contributions of \ref{eq:ideal_loss3} and \ref{eq:ideal_loss4} that arise from an additive approach.
A similar bilevel optimization scheme is employed in~\cite{zhang2024bilo}, for similar reasons,
to solve a different problem. 
In the next section we provide specific examples $\dd_1$ and $\dd_2$ and discuss some of their properties.

\section{Choice of Divergences} \label{sec:M}
In order to instantiate the bilevel optimization scheme defined by 
\ref{eq:ideal_loss3}, \ref{eq:ideal_loss4} to obtain an implementable algorithm, we now define 
specific choices of the divergence terms $\dd_1$ and $\dd_2$ (Subsection \ref{ssec:M}) and discuss empirical approximation of the input measures (Subsection \ref{ssec:E}) required to evaluate these
divergences in practice. And, to further establish a context for our work
on learning priors, we make a connection  to Bayesian inversion in the setting of equation \eqref{eq:1} when $N=1$ (Subsection \ref{ssec:B}.)

\subsection{Divergences}
\label{ssec:M}
In this work, to effectively and efficiently compare empirical measures, we will use the sliced-Wasserstein (SW) distance to define the divergence term for $\dd_1$ and Wasserstein distance for $\dd_2$. To define precisely what we do, we first introduce the weighted
Wasserstein distance
\begin{align}\label{eq:WassB}
    \mathsf{W}_{2,\B}^2(\nu, \mu) = \inf_{\gamma \in \Pi(\nu, \mu)} \int_{\Rb^d \times \Rb^d} \|x - y\|_{\B}^2 \md \gamma(x, y), 
\end{align}
where coupling $\Pi(\nu, \mu)$ is the set of all joint probability measures on $\Rb^d \times \Rb^d$ with marginals $\nu$ and $\mu$, and $\B$ is positive and self-adjoint. We let $\mathsf{W}_{2}:=\mathsf{W}_{2,\mathrm{I}}.$
The following lemma shows that the weighted squared Wasserstein-2 distance can be seen as the squared Wasserstein-2 distance of pushforwards of the original measures. 
\begin{lemma}\label{lemma:pushW}
For $P_\B(\cdot) = \B^{-1/2}\,\cdot$ it follows that
\begin{align}
    \mathsf{W}^2_{2, \B}(\nu, \mu) =  \mathsf{W}^2_{2}(P_{\B\#}\nu, P_{\B\#}\mu). 
\end{align}
\end{lemma}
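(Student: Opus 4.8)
The plan is to reduce the statement to a single pointwise identity for the cost together with a change-of-variables argument for the couplings. The starting point is the observation that, since $\langle \cdot, \cdot \rangle_\B = \langle \cdot, \B^{-1}\cdot\rangle$ and $\B$ is positive self-adjoint with well-defined square root $\B^{1/2}$, the weighted cost factors through $P_\B$: for any $x,y \in \Rb^d$,
\begin{align}
\|x-y\|_{\B}^2 = \langle x-y, \B^{-1}(x-y)\rangle = \langle \B^{-1/2}(x-y), \B^{-1/2}(x-y)\rangle = \|P_\B x - P_\B y\|^2, \nonumber
\end{align}
where $\|\cdot\|$ is the unweighted ($\B = \mathrm{I}$) Euclidean norm. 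Thus the integrand defining $\mathsf{W}^2_{2,\B}$ equals the standard quadratic cost evaluated at the transformed points $P_\B x$ and $P_\B y$.

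Next I would set up a bijection between the two sets of couplings. Because $\B$ is positive and self-adjoint, $P_\B = \B^{-1/2}\cdot$ is a linear isomorphism of $\Rb^d$, so the product map $P_\B \times P_\B$ is a bijection of $\Rb^d \times \Rb^d$. I would then verify that $\gamma \mapsto (P_\B \times P_\B)_\# \gamma$ maps $\Pi(\nu,\mu)$ onto $\Pi(P_{\B\#}\nu, P_{\B\#}\mu)$, and is invertible with inverse given by pushforward under $(P_\B^{-1} \times P_\B^{-1})$. The marginal check is the crux here: if $\gamma$ has first marginal $\nu$, then the first marginal of $(P_\B \times P_\B)_\# \gamma$ is $P_{\B\#}\nu$, since projection commutes with the product pushforward; likewise for the second marginal. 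Hence the constraint sets correspond exactly under this map.

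The final step combines the two ingredients via the change-of-variables formula for pushforward measures. Writing $\tilde\gamma = (P_\B \times P_\B)_\# \gamma$, I would compute
\begin{align}
\int \|x-y\|_{\B}^2 \md\gamma(x,y) = \int \|P_\B x - P_\B y\|^2 \md\gamma(x,y) = \int \|\tilde x - \tilde y\|^2 \md\tilde\gamma(\tilde x, \tilde y), \nonumber
\end{align}
so that each coupling $\gamma$ for $(\nu,\mu)$ achieves exactly the same cost under $\|\cdot\|_\B$ as its image $\tilde\gamma$ achieves under $\|\cdot\|$ for $(P_{\B\#}\nu, P_{\B\#}\mu)$. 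Taking the infimum over $\gamma \in \Pi(\nu,\mu)$ on the left, and noting the bijection sends this onto the infimum over $\tilde\gamma \in \Pi(P_{\B\#}\nu, P_{\B\#}\mu)$ on the right, yields the claimed equality $\mathsf{W}^2_{2,\B}(\nu,\mu) = \mathsf{W}^2_2(P_{\B\#}\nu, P_{\B\#}\mu)$.

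I do not expect any serious obstacle, as the argument is essentially a linear change of variables; the only point requiring care is the measurability and marginal-preservation of the coupling correspondence, which follows cleanly from the invertibility of the linear map $P_\B$ and the functorial behaviour of pushforward under composition and projection. The existence of a self-adjoint square root $\B^{1/2}$ (and hence of $\B^{-1/2}$) should be recorded at the outset, since it underpins both the pointwise cost identity and the definition of $P_\B$.
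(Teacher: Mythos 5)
Your proposal is correct and takes essentially the same route as the paper's proof: both rest on the pointwise identity $\|x-y\|_{\B}^2 = \|P_\B x - P_\B y\|_2^2$ together with the correspondence $\gamma \mapsto (P_\B \otimes P_\B)_\#\gamma$ between $\Pi(\nu,\mu)$ and $\Pi(P_{\B\#}\nu, P_{\B\#}\mu)$, followed by a change of variables and taking infima. If anything, you make explicit the invertibility and marginal-preservation checks that the paper only asserts.
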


\begin{proof}
    See Appendix~\ref{app:sec:pushW}.
\end{proof}

We now define the weighted and sliced-Wasserstein distance by
\begin{align}
    \mathsf{SW}^2_{2, \B}(\nu, \mu) 
    = \int_{\mathbb{S}^{d-1}} \mathsf{W}_{2}^2(P^\theta_{\B\#}\nu, P^\theta_{\B\#}\mu) \md \theta,
\end{align}
where  $P^\theta_\B(\cdot)=\langle \B^{-\frac12}\cdot, \theta \rangle.$
The sliced-Wasserstein distance leads to a computationally efficient alternative to the Wasserstein distance because it may be implemented by
Monte Carlo approximation of integration over $\theta$ and then each slice, resulting from
a randomly chosen $\theta$, involves only evaluation of a Wasserstein distance between probabilities
in $\bR.$

Assuming that $\eta = \mathcal{N}(0, \Gamma)$ we then consider the losses
\ref{eq:ideal_loss3}, \ref{eq:ideal_loss4} with $\dd_1(\cdot,\cdot)=\frac{d_y}{2}\mathsf{SW}^2_{2, \Gamma}(\cdot,\cdot)$ and $\dd_2(\cdot,\cdot)=\mathsf{W}^2_{2}(\cdot,\cdot),$ so that
\begin{subequations}
\label{eq:need}
\begin{align}
    \label{eq:J3_swass}
    \sJ_3(\alpha) &= \frac{d_y}{2}\mathsf{SW}^2_{2, \Gamma}\Bigl(\nu, 
    \eta * (g \circ F^{\phi^*(\alpha)} \circ T^\alpha)_{\,\#} \mu_0\Bigr) + h(\alpha),\\
    \label{eq:J4_wass}
    \sJ_4(\phi;\alpha) &=  \mathsf{W}_{2}^2\Bigl(\delta_0, (\cO^\phi \circ T^\alpha)_{\,\#}\mu_0\Bigr),\\
     \phi^\star(\alpha) &= \underset{\phi}{\mathrm{arg min}}\;\sJ_4(\phi;\alpha).
\end{align}
\end{subequations}
\begin{remark}
    We highlight that using the squared Wasserstein-2 metric between a Dirac at zero and an abitrary measure reduces to computing an expected squared 2-norm of the samples drawn from that measure (Lemma \ref{lemma:weightedW}). Hence, \eqref{eq:J4_wass} reduces to computing 
    % the expected squared 2-norm of the residual for PDE parameters drawn from $(T^\alpha)_\#\mu_0$
    $$\sJ_4(\phi;\alpha) = \Eb_{z\sim  (T^\alpha)_{\,\#}\mu_0 } \|\cO^\phi(z)\|^2_2.$$
    Framing \eqref{eq:J4_wass} this way gives a different interpretation to the commonly used physics-informed ML loss function. \diabox0{cblack}
\end{remark}

\subsection{Empiricalization}
\label{ssec:E}

The optimization problem in \eqref{eq:need} forms the basis of our computational methodology in
this paper. However, to implement it, we need to use empirical approximations of the two input
measures that define the loss function $\sJ_3.$
There are two ways in which evaluation of $\sJ_3$ defined by \eqref{eq:need} must be empiricalized
to make a tractable algorithm:
\begin{itemize}
    \item measure $\nu$ is replaced by measure $$\nu^N=\frac{1}{N} \sum_{n=1}^N  \delta_{y^{(n)}},$$
    reflecting the fact that working with $\nu$ is not computationally tractable, but samples from $\nu$ are
    available;
    \item measures $\nu^N$ and $(g \circ F^{\phi^*(\alpha)} \circ T^\alpha)_{\,\#} \mu_0$ are replaced by their empirical approximations using, in each case, $N_s$ independent samples,
    reflecting the fact that working with $(g \circ F^{\phi^*(\alpha)} \circ T^\alpha)_{\,\#} \mu_0$ is not computationally tractable, together with the computational simplicity arising from
    using the same number of samples in each argument of the divergence;
    \item measure $(\cO^\phi \circ T^\alpha)_{\,\#}\mu_0$ is empiricalized using $N_r$ independent samples
    from $\mu_0$.
\end{itemize}
\begin{remark}\label{rem:scalingVBayes}
            In contrast to empirical/hierarchical Bayesian approaches for similar problems, the proposed methodology side-steps the sampling of a challenging posterior distribution with dimension that grows like  $O(NM)$, where $N$ is the number of physical systems from which we have data, and $M$ is the dimensionality of the parameters, $z^{(n)}$, of the individual physical systems. Instead, this is replaced with the straightforward empiricalization of a pushforward measure with $N_s$ samples of dimension $M$. \diabox0{cblack}
\end{remark}
In the numerical examples explored in Section~\ref{sec:N}, for 1D Darcy $M=20$, for 2D Darcy $M=400$, these will be the number of bases parametrizing the permiability field in the two setups respectively.

\subsection{Connection to Bayes Theorem}
\label{ssec:B}
Consider the inverse problem defined by \eqref{eq:1} in setting $N=1:$
\begin{align}
\label{eq:11}
y = \cG(z) + \varepsilon,
\end{align}
where $\varepsilon \sim \eta := \mathcal{N}(0, \Gamma)$ and $\cG: Z \to \bR^d.$
Consider \ref{eq:ideal_loss} with $\nu=\delta_{y}$, $\cG(\cdot)$ replacing 
$(g \circ F^\dagger)(\cdot)$\footnote{There is nothing intrinsic to the factorization  $\cG=g \circ F^\dagger$ in this subsection; results are expressed purely in terms of $\cG$.} and choice of $\dd_1$ as in Subsection \ref{ssec:M}:
\begin{equation}
\label{eq:ideal_loss4_N1}    
{\sJ}(\mu) = \frac{d}{2}\mathsf{SW}^2_{2, \Gamma}\Bigl(\delta_y, \eta * (\cG)_\# \mu \Bigr) + {\cH}(\mu).
\end{equation}

Thus we have returned to the population loss description of the problem in the setting where
optimization to determine the prior is over all probability measures, not the parameterized
family that we will use for computation. Thus the regularization is on the space of probability
measures. The message of the following theorem is that, with this problem formulation on the
space of measures, in the case $N=1$ and with appropriate
choice of regularization, Bayes theorem is recovered.

\begin{theorem} \label{t:T}
    Define $\cH(\mu) := D_\KL(\mu||\mu^{\mathrm{prior}})$ for some probability measure 
    $\mu^{\mathrm{prior}}$ $\in$ $\cP(Z).$
    Consider the Bayesian inverse problem defined by \eqref{eq:11} with $z \sim \mu^{\mathrm{prior}}$ independent
    of $\varepsilon \sim \eta := \mathcal{N}(0, \Gamma).$  Then the minimizer of \eqref{eq:ideal_loss4_N1}  over the set of probability measures 
    \{$\mu \in \cP(Z)$ : $\bE_{y'\sim \eta * (\cG_\#) \mu} \|y'\|_\Gamma^2 < \infty\}$ is the Bayesian posterior given by
    \begin{subequations}
     \label{th:bays_equiv}   
    \begin{align}
         \mu^y(A) &= \frac{1}{\cZ}\int_A \exp\left(-\frac{1}{2}\|y - \cG(z)\|^2_\Gamma\right){\md \mu^{\mathrm{prior}}(z)},\\
        \cZ &= \int_Z \exp\left(-\frac{1}{2}\|y - \cG(z)\|^2_\Gamma\right){\md \mu^{\mathrm{prior}}(z)}.
    \end{align}
    \end{subequations}
\end{theorem}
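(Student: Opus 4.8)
The plan is to collapse the sliced-Wasserstein term into a simple data-misfit expectation, after which \eqref{eq:ideal_loss4_N1} becomes a classical free-energy functional whose minimizer is delivered by the Gibbs variational principle (equivalently, the Donsker--Varadhan formula). The first, more computational, half of the argument exploits that one argument of the divergence is the Dirac $\delta_y$; the second half is the standard characterization of the Gibbs measure as the minimizer of relative entropy plus a linear potential.

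First I would reduce the divergence. Write $\rho := \eta * \cG_\#\mu$. Under each slice the pushforward of $\delta_y$ is again a Dirac, $P^\theta_{\Gamma\#}\delta_y = \delta_{\langle \Gamma^{-1/2}y,\theta\rangle}$, and since the $\mathsf{W}_2^2$ distance on $\bR$ between a Dirac $\delta_a$ and an arbitrary measure equals the second moment $\Eb[(a-s)^2]$ (cf. Lemma~\ref{lemma:weightedW}), each slice contributes $\Eb_{y'\sim\rho}\langle\Gamma^{-1/2}(y-y'),\theta\rangle^2$. Integrating over $\theta\in\mathbb{S}^{d-1}$ and using the elementary identity $\int_{\mathbb{S}^{d-1}}\langle v,\theta\rangle^2\,\md\theta = \|v\|^2/d$ for the uniform probability measure on the sphere, together with Fubini (licensed by the moment constraint defining the admissible set), gives
\[
\frac{d}{2}\mathsf{SW}^2_{2,\Gamma}(\delta_y,\rho) = \tfrac12\,\Eb_{y'\sim\rho}\|y-y'\|_\Gamma^2 .
\]
This is precisely why the prefactor $\tfrac{d}{2}$ was chosen: it cancels the $1/d$ from the sphere average. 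I would then insert the convolution structure $y' = \cG(z)+\varepsilon$ with $z\sim\mu$ and $\varepsilon\sim\eta$ independent and mean zero; expanding the square, the cross term vanishes, and the noise term $\tfrac12\Eb_\eta\|\varepsilon\|_\Gamma^2 = \tfrac{d}{2}$ is an additive constant independent of $\mu$, leaving $\tfrac12\Eb_{z\sim\mu}\|y-\cG(z)\|_\Gamma^2$.

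With $\Phi(z):=\tfrac12\|y-\cG(z)\|_\Gamma^2\ge 0$ the functional \eqref{eq:ideal_loss4_N1} thus equals $\int_Z \Phi\,\md\mu + D_\KL(\mu\|\mu^{\mathrm{prior}})$ up to that constant. The minimizer of this free energy over probability measures is, by the Gibbs variational principle, the tilted measure $\md\mu^y \propto e^{-\Phi}\,\md\mu^{\mathrm{prior}}$ with normalizer $\cZ = \int_Z e^{-\Phi}\,\md\mu^{\mathrm{prior}}$, which is exactly \eqref{th:bays_equiv}; uniqueness follows from strict convexity of $\mu\mapsto D_\KL(\mu\|\mu^{\mathrm{prior}})$. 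To make this rigorous I would verify two admissibility points: that $\cZ\in(0,\infty)$, which holds because $0\le e^{-\Phi}\le 1$ forces $\cZ\le 1$, while $\cZ>0$ since $\Phi<\infty$ on a set of positive prior mass; and that the claimed minimizer lies in the constraint set $\{\mu:\Eb_{y'\sim\eta*\cG_\#\mu}\|y'\|_\Gamma^2<\infty\}$, which follows because $\mu^y$ has prior-density bounded by $1/\cZ$ and $\|\cG(z)\|^2 e^{-\Phi(z)}$ is bounded in $\cG(z)$ by Gaussian decay, so the relevant second moment is finite.

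The genuinely delicate step is the first reduction rather than the variational identification: one must check that interchanging the slice integral, the coupling infimum, and the expectation over $\rho$ is permitted, which is where the moment condition defining the admissible set is essential, guaranteeing that all quantities are finite so that Fubini and the expansion of the square are valid. Once the divergence is shown to equal the misfit expectation, the remainder is the textbook Donsker--Varadhan argument and the matching of $e^{-\Phi}$ to the likelihood in \eqref{th:bays_equiv} is immediate.
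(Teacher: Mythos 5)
Your proposal is correct and follows essentially the same route as the paper: you collapse the sliced-Wasserstein term to $\tfrac12\Eb_{z\sim\mu}\|y-\cG(z)\|_\Gamma^2$ plus a $\mu$-independent constant (combining what the paper states as Lemmas~\ref{lemma:weightedW}, \ref{lemma:w_sw_equiv} and~\ref{lemma:gradient_noise_conv} --- the Dirac-coupling identity, the sphere-average identity $\int_{\mathbb{S}^{d-1}}\theta\theta^\top\,\md\theta=\tfrac1d \mathrm{I}$, and the centred-noise convolution expansion), and then identify the minimizer via the Gibbs variational principle, exactly as the paper does by writing $\sJ(\mu)=D_\KL(\mu\|\mu^y)+C_2$. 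Your added checks that $\cZ\in(0,\infty)$ and that $\mu^y$ lies in the admissible set are a small but worthwhile supplement to the paper's argument, which relegates the moment issue to a remark.
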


\begin{remark} As discussed in the introduction to this paper, our method learns a \textit{data-informed probability measure} -- a generative model -- for $z$, which may be used as a \textit{prior} for
downstream inference tasks. The theorem shows that, in the special case of $N=1$, the learned prior actually coincides with the Bayesian posterior for the inverse problem defined by \eqref{eq:1}, if regularizer $\cH$ is chosen appropriately. This is entirely consistent with our broader agenda when $N>1$ as the posterior distribution is the natural prior for downstream tasks in Bayesian inference. \diabox0{cblack}
\end{remark}

To prove this theorem we establish a sequence of lemmas. The first shows
that the weighted Wasserstein-2 distance may be simplified when one of its argument is a Dirac:
\begin{lemma}\label{lemma:weightedW}
For any $y \in \Rb^{d}$ and $\mu \in \mathcal{P}(\Rb^d)$
\begin{align}
\label{eq:A}
 \mathsf{W}^2_{2,\Gamma}(\delta_y, \mu) = \Eb_{x \sim \mu} \|y-x\|_{\Gamma}^2. 
\end{align}
\end{lemma}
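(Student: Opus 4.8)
The plan is to exploit the fact that when one of the two marginals is a Dirac mass, the set of admissible couplings collapses to a single element, so the infimum in the definition \eqref{eq:WassB} of $\mathsf{W}^2_{2,\Gamma}$ is attained trivially. First I would observe that any coupling $\gamma \in \Pi(\delta_y, \mu)$ has first marginal equal to $\delta_y$, which forces $\gamma$ to be concentrated on the slice $\{y\} \times \Rb^d$: indeed $\gamma\bigl((\Rb^d \setminus \{y\}) \times \Rb^d\bigr) \le \delta_y(\Rb^d \setminus \{y\}) = 0$. Combined with the requirement that the second marginal be $\mu$, this pins $\gamma$ down uniquely as the product measure $\delta_y \otimes \mu$; I would verify that this product does have the correct two marginals, so that $\Pi(\delta_y, \mu) = \{\delta_y \otimes \mu\}$ is a singleton.

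With the coupling determined, the infimum in \eqref{eq:WassB} reduces to the value of the cost integral at $\gamma = \delta_y \otimes \mu$. Integrating the first variable against $\delta_y$ via Fubini (or directly from the definition of the product measure), the cost $\int_{\Rb^d \times \Rb^d} \|s - t\|_\Gamma^2 \, \md (\delta_y \otimes \mu)(s,t)$ collapses to $\int_{\Rb^d} \|y - t\|_\Gamma^2 \, \md \mu(t)$, which is exactly $\Eb_{x \sim \mu} \|y - x\|_\Gamma^2$. This yields the claimed identity.

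There is no substantive obstacle here: the result is essentially a consequence of the degeneracy of one marginal. The only point deserving a line of care is the uniqueness of the coupling, and I would keep that step self-contained using the measure-zero estimate above rather than invoking an external result. One might also remark that the weighting $\B = \Gamma$ plays no special role in the argument—the same reasoning applies for any positive self-adjoint $\B$—so the lemma could be stated for $\mathsf{W}^2_{2,\B}$ in general, with the $\Gamma$-weighted version being the instance used subsequently.
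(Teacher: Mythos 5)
Your proof is correct and takes essentially the same route as the paper: both arguments rest on the observation that $\Pi(\delta_y,\mu)$ is the singleton $\{\delta_y\otimes\mu\}$, after which the infimum collapses to the product-measure integral $\Eb_{x\sim\mu}\|y-x\|_\Gamma^2$. The only difference is cosmetic—you prove the singleton property directly via the measure-zero estimate, whereas the paper cites \cite{santambrogio2015optimal} for it—and your closing remark that the argument works for any positive self-adjoint weight $\B$, not just $\Gamma$, is accurate.
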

\begin{proof}
    See Appendix~\ref{app:sec:weightedW}.
\end{proof}
Using the definition of pushforward, it follows from Lemma \ref{lemma:weightedW} that
\begin{align}
\label{eq:B}
        \mathsf{W}_{2, \Gamma}^2(\delta_y, (\cG)_\#\mu) =
        \Eb_{z \sim \mu} \|y-\cG(z)\|_{\Gamma}^2=\int_{\Rb^{d}} \|y - \cG(z)\|_\Gamma^2\md\mu(z).
    \end{align}
The following lemma shows
that the sliced-Wasserstein metric also simplifies when one of its argument is a Dirac:
\begin{lemma}\label{lemma:w_sw_equiv} Let $y\in\Rb^d$ and $\mu \in \mathcal{P}(\Rb^d)$, and assume $\bE_{z\sim\mu}\|z\|_\Gamma^2 < \infty$. Then
\begin{align}
    \mathsf{SW}_{2,\Gamma}^2(\delta_{y}, \mu)=\frac{1}{d}  \mathsf{W}_{2,\Gamma}^2(\delta_{y}, \mu).
\end{align}
\end{lemma}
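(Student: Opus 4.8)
The plan is to collapse both distances to the same quantity, $\Eb_{z\sim\mu}\|y-z\|_\Gamma^2$, by exploiting that one argument is a Dirac mass. For the right-hand side this is already supplied by Lemma~\ref{lemma:weightedW}, which gives $\mathsf{W}_{2,\Gamma}^2(\delta_y,\mu)=\Eb_{z\sim\mu}\|y-z\|_\Gamma^2$; it therefore suffices to establish that $\mathsf{SW}_{2,\Gamma}^2(\delta_y,\mu)=\tfrac1d\,\Eb_{z\sim\mu}\|y-z\|_\Gamma^2$, and the claimed identity follows.

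First I would unfold the definition of the sliced distance. Because each projection $P^\theta_\Gamma(\cdot)=\langle\Gamma^{-1/2}\cdot,\theta\rangle$ is a scalar-valued linear map, the pushforward $P^\theta_{\Gamma\#}\delta_y$ is again a Dirac mass, located at $\langle\Gamma^{-1/2}y,\theta\rangle\in\bR$. Applying Lemma~\ref{lemma:weightedW} in the one-dimensional case (weight equal to $1$, so that $\mathsf{W}_2=\mathsf{W}_{2,1}$) to each slice yields
\begin{align}
\mathsf{W}_2^2\bigl(P^\theta_{\Gamma\#}\delta_y,\,P^\theta_{\Gamma\#}\mu\bigr)
=\Eb_{z\sim\mu}\bigl|\langle\Gamma^{-1/2}(y-z),\theta\rangle\bigr|^2 .
\end{align}
Next I would substitute this into the definition of $\mathsf{SW}_{2,\Gamma}^2$ and exchange the integral over $\mathbb{S}^{d-1}$ with the expectation over $\mu$, which is legitimate by Tonelli's theorem since the integrand is nonnegative (finiteness being guaranteed by the hypothesis $\Eb_{z\sim\mu}\|z\|_\Gamma^2<\infty$). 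The only genuine computation is the spherical second-moment identity $\int_{\mathbb{S}^{d-1}}\langle v,\theta\rangle^2\,\md\theta=\tfrac1d\|v\|^2$, valid when $\md\theta$ is the uniform probability measure on the sphere; this follows from the symmetry relation $\int_{\mathbb{S}^{d-1}}\theta_i\theta_j\,\md\theta=\tfrac1d\delta_{ij}$. Taking $v=\Gamma^{-1/2}(y-z)$ and using $\|\Gamma^{-1/2}(y-z)\|^2=\|y-z\|_\Gamma^2$ then gives $\mathsf{SW}_{2,\Gamma}^2(\delta_y,\mu)=\tfrac1d\,\Eb_{z\sim\mu}\|y-z\|_\Gamma^2$, which combined with Lemma~\ref{lemma:weightedW} completes the argument.

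There is no serious obstacle here: the proof is a short chain consisting of a slicewise application of Lemma~\ref{lemma:weightedW}, an interchange of integration order, and an elementary moment computation on the sphere. The one point requiring care is making the normalization convention for $\md\theta$ explicit, since the factor $1/d$ in the statement is precisely the spherical average of $\theta\mapsto\langle v,\theta\rangle^2/\|v\|^2$; with a different normalization of the slice measure the constant would change accordingly, so I would fix $\md\theta$ to be the uniform probability measure on $\mathbb{S}^{d-1}$ at the outset.
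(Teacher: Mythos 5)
Your proof is correct and follows essentially the same route as the paper's: pushing the Dirac through each slice, applying Lemma~\ref{lemma:weightedW} in one dimension, interchanging the spherical integral with the expectation, and invoking the second-moment identity $\int_{\mathbb{S}^{d-1}}\theta\theta^\top\,\md\theta=\tfrac{1}{d}\I$ for the uniform probability measure on the sphere. Your explicit remark about the normalization of $\md\theta$ is a sound point of care that the paper leaves implicit (via its citation for the spherical covariance), but it does not constitute a different argument.
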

\begin{proof}
  See Appendix~\ref{app:w_sw_equiv}
\end{proof}
The final lemma concerns convolution:
\begin{lemma}\label{lemma:gradient_noise_conv}
Let $\sI: \cP(Z) \to \bR$ be defined by $\sI(\cdot) = \mathsf{W}_{2,\Gamma}^2(\delta_y, \cdot)$.
Let $\eta$ be a centred random variable and finite second moment. Then
    \begin{align}
        \sI(\eta*\mu) = \sI(\mu) + \bE_{\varepsilon \sim \eta} \|\varepsilon\|_{\Gamma}^2.\nonumber
    \end{align} 
\end{lemma}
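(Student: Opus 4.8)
The plan is to reduce both sides to expectations via Lemma~\ref{lemma:weightedW} and then exploit the centeredness of $\eta$ to discard a cross term. First I would apply Lemma~\ref{lemma:weightedW} to the convolved measure $\eta*\mu$, writing
$$\sI(\eta*\mu) = \mathsf{W}_{2,\Gamma}^2(\delta_y, \eta*\mu) = \Eb_{w \sim \eta*\mu} \|y-w\|_\Gamma^2.$$
The key structural input is that a draw $w \sim \eta*\mu$ has the same law as $x + \varepsilon$, where $x \sim \mu$ and $\varepsilon \sim \eta$ are drawn independently; this is precisely the defining property of convolution of measures. Substituting this representation yields
$$\sI(\eta*\mu) = \Eb_{x\sim\mu,\,\varepsilon\sim\eta} \|(y-x)-\varepsilon\|_\Gamma^2.$$

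Next I would expand the weighted squared norm using the bilinearity of the inner product $\langle\cdot,\cdot\rangle_\Gamma$,
$$\|(y-x)-\varepsilon\|_\Gamma^2 = \|y-x\|_\Gamma^2 - 2\langle y-x, \varepsilon\rangle_\Gamma + \|\varepsilon\|_\Gamma^2,$$
and take the joint expectation. By independence of $x$ and $\varepsilon$, the middle term factorizes as $-2\,\Eb_x\langle y-x, \Eb_\varepsilon[\varepsilon]\rangle_\Gamma$, and since $\eta$ is centred we have $\Eb_\varepsilon[\varepsilon]=0$, so this cross term vanishes. The two surviving terms are exactly $\Eb_{x\sim\mu}\|y-x\|_\Gamma^2 = \sI(\mu)$, again by Lemma~\ref{lemma:weightedW}, and $\Eb_{\varepsilon\sim\eta}\|\varepsilon\|_\Gamma^2$, which together give the asserted identity.

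There is no genuine obstacle beyond bookkeeping; the single point needing care is integrability, so that the expectation of the sum may be split into a sum of expectations and the cross term is well defined. This is ensured by the finite-second-moment hypothesis on $\eta$ together with the finiteness of $\sI(\mu)$ (implicit in the statement, as otherwise both sides are infinite): a Cauchy--Schwarz bound in the $\langle\cdot,\cdot\rangle_\Gamma$ inner product controls the cross term by $\big(\Eb_x\|y-x\|_\Gamma^2\big)^{1/2}\big(\Eb_\varepsilon\|\varepsilon\|_\Gamma^2\big)^{1/2}$, justifying the manipulations.
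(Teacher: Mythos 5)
Your proof is correct and follows essentially the same route as the paper's: both reduce $\sI(\eta*\mu)$ to $\bE_{(x,\varepsilon)\sim\mu\otimes\eta}\|y-x-\varepsilon\|_\Gamma^2$ via Lemma~\ref{lemma:weightedW} and the convolution representation, expand the weighted square, and kill the cross term using independence and centredness of $\eta$. Your added Cauchy--Schwarz remark on integrability is a slightly more careful justification of the step the paper simply asserts with its finite-second-moment hypothesis.
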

\begin{proof}
    See Appendix~\ref{proof:gradient_noise_conv}.
\end{proof}

\begin{proof}[Proof of Theorem \ref{t:T}]
By equation \eqref{eq:B} and Lemmas \ref{lemma:w_sw_equiv} and \ref{lemma:gradient_noise_conv}
it follows that the loss function \eqref{eq:ideal_loss4_N1} can be written as
\begin{align}\label{eqn:functional_one_data}
    \sJ(\mu) = \frac{1}{2}\Eb_{z\sim \mu}\| y - \cG(z) \|^2_\Gamma + \cH(\mu) + C_1,
\end{align}
where $C_1$ is independent of $\mu.$ With the choice $\cH(\mu) := D_\KL(\mu||\mu^{\mathrm{prior}})$ we
see that $\sJ(\mu)=D_\KL(\mu||\mu^{y})+C_2$, where $C_2$ is independent of $\mu$, and is determined by $C_1$
and $\cZ.$ The result follows
since $D_\KL(\cdot||\mu^{y})$ is minimized at $\mu^y.$
\end{proof}
\begin{remark} In order to use Lemmas~\ref{lemma:w_sw_equiv} and   \ref{lemma:gradient_noise_conv} in the proof of Theorem~\ref{t:T}, we assume the pushforward measure $\eta * (\cG_\# \mu)$ has $\|\cdot\|_\Gamma^2$-moment finite as stated in Theorem~\ref{t:T}. This will follow from second moments of $\cG_\# \mu.$
Such results can often be established using the Fernique theorem; see
for example \cite{charrier2012strong,dashti2011uncertainty}. \diabox0{cblack}
\end{remark}

\begin{remark}
\label{rem:2}
Now consider functional $\sJ(\cdot)$ given by \eqref{eq:ideal_loss4_N1}, replacing 
$\cG$ by the specific choice $g \circ F^\dagger$ arising from the inverse problem defined
by $\eqref{eq:2}$ in the case where $N=1.$ If we now parameterize target measure $\mu$ by
$\mu=T^\alpha_\#\mu_0$, make the choice of KL divergence for $\cH$ and view $\sJ(\cdot)$ 
as parameterized by $\alpha \in \bR^{d_\alpha}$ rather than by $\mu \in \bP(Z),$ we obtain
\begin{align*}  
{\sJ}(\alpha) = \frac{d}{2}\mathsf{SW}^2_{2, \Gamma}\Bigl(\delta_y, \eta * (g \circ F^\dagger \circ T^\alpha)_\# \mu_0 \Bigr) + D_\KL((T^\alpha)_{\#}\mu_0||\mu^{\mathrm{prior}}).
\end{align*}
By the same reasoning used in the proof of Theorem \ref{t:T} we may rewrite this as
\begin{align*}
    {\sJ}(\alpha) = \frac{1}{2}\Eb_{z\sim (T^\alpha)_{\#}\mu_0}\| y - (g \circ F^\dagger)(z) \|^2_\Gamma + D_\KL((T^\alpha)_{\#}\mu_0||\mu^{\mathrm{prior}}).
\end{align*}
From this form it is clear that minimizing ${\sJ}(\alpha)$ over $\alpha \in \bR^{d_\alpha}$ is simply
the variational Bayes methodology applied to approximate the Bayesian posterior $\mu^y$ given by 
equation \eqref{th:bays_equiv}. Furthermore this provides motivation for the consideration of \ref{eq:ideal_loss2} to determine the prior on $z$, in the case where $N>1$, noting that it reduces to variational Bayes when $N=1$ with appropriate choices of $\dd_1(\cdot,\cdot)$ and $h(\cdot).$  \diabox0{cblack}
\end{remark}

\section{Algorithms} 
\label{sec:A}

\begin{algorithm}[t]
\caption{Prior Calibration}
\label{alg:prior_cal}
\begin{algorithmic}[1]
\STATE{Initialize $\alpha_0$, $T$ (\text{number of iterations}), $N_s$ (\text{number of samples for $\sJ_2$}), $F^\dagger$ (\text{forward model})}
\FOR{$t=1, \hdots, T$}
\FOR{$i=1, \hdots, N_s$}
% \STATE{Sample $z^{(i)}\sim\mu^{\alpha_{t-1}}$}
\STATE{Sample $\varepsilon_0^{(i)}\sim \mu_0$}
\STATE{Sample $z^{(i)}\sim T^{\alpha_{t-1}}(\varepsilon_0^{(i)})$ }
\STATE{Sample $\varepsilon_\eta^{(i)}\sim\eta$}
\STATE{Compute ${y'}^{(i)} = g \circ F^\dagger(z^{(i)}) + \varepsilon_\eta^{(i)}$}
\STATE{Sample $y^{(i)}\sim\nu^N$}
\ENDFOR
\STATE{Compute $\sJ_2(\alpha_{t-1})$ from the $N_s$ samples $\{y^{(i)}, {y'}^{(i)}\}_{i=1}^{N_s}$.}
\STATE{$\alpha_t = \text{OPTIMISER}(\alpha_{t-1}, \sJ_2(\alpha_{t-1}))$}
\ENDFOR 
\RETURN $\alpha^\star \leftarrow \alpha_T$.
\end{algorithmic}
\end{algorithm}

In this section we discuss practical aspects pertaining to the implementation of the proposed methodologies.
Firstly, although Remark \ref{rem:2} suggest a specific choice for regularization $h(\alpha)$ in
\ref{eq:ideal_loss2} or \ref{eq:ideal_loss3}; in practice it is not computationally straightforward to work with this choice
and simpler choices are made. Secondly, the evaluation of the objective functions is carried out
using the empiricalization described in Subsection \ref{ssec:E}. Thirdly the  physics-based residual
can be evaluated using Lemma \ref{lemma:weightedW} and the considerations of Subsection \ref{ssec:E}. Fourthly, once loss functions are evaluated we obtain gradients with respect to prior (and operator) parameters through back-propagation with standard ML library tools, such as \text{JAX}~\cite{jax2018github}.

In Algorithm~\ref{alg:prior_cal} we show how to implement the proposed inference methodology of~\ref{eq:ideal_loss2} for the task of prior calibration given a forward model $F^\dagger$.
In Algorithm~\ref{alg:prior_cal_op_learn} we show how to implement \ref{eq:ideal_loss3} for joint prior calibration and operator learning. We note that to correctly implement the bilevel optimization scheme proposed to minimize \ref{eq:ideal_loss3} we must differentiate through the lower-level optimization steps given by \ref{eq:ideal_loss4} to take into account the dependence of $\phi^\star(\alpha)$ on $\alpha$. This incurs additional computational costs; efficient approximation methods for this task will be explored in future works.

We note, only the outer-most parameter update loops must be computed sequentially. All other loops can be efficiently computed in parallel in a GPU efficient manner. In all experiments we make use of the Adam optimizer~\cite{kingma2014adam}. The sliced-Wasserstein implementation is based on that of~\cite{flamary2021pot}. If one is interested in applying this methodology to nonlinear PDEs, computing~\ref{eq:ideal_loss2} requires the solution of $N_s$ nonlinear systems of equations at every parameter update step. However, this is not the case for the method using~\ref{eq:ideal_loss3}, \ref{eq:ideal_loss4} as we only need to compute $N_r$ residuals, which is done the same manner for linear or nonlinear PDEs. (See, also, Subsection \ref{ssec:E} for definitions of $N_s, N_r.$)

\begin{algorithm}[t]
\caption{Prior Calibration and Operator Learning}
\label{alg:prior_cal_op_learn}
\begin{algorithmic}[1]
\STATE{Initialize $\alpha_0, \phi_{0}$, $T$ (\text{number of outer-loop $\alpha$ updates}), $L$ (number of inner-loop $\phi$ updates), $N_s$ (\text{number of samples for $\sJ_3$}), $N_r$ (number of samples for $\sJ_4$)}
\FOR{$t=1, \hdots, T$}
\FOR{$l=1, \hdots, L$}
\FOR{$j=1, \hdots, N_r$}
\STATE{Sample $\varepsilon_0^{(j)}\sim \mu_0$}
\STATE{Sample $z^{(j)}\sim T^{\alpha_{t-1}}(\varepsilon_0^{(i)})$ }
\STATE{Compute $r^{(j)} = \cO^{\phi_{l-1}(\alpha_{t-1})}(z^{(j)})$}
\ENDFOR
\STATE{Compute $\sJ_4(\alpha_{t-1}, \phi_{l-1}(\alpha_{t-1}))$ from the $N_r$ samples $\{r^{(j)}\}_{j=1}^{N_r}$.}
\STATE{$\phi_{l}(\alpha_{t-1}) = \text{OPTIMISER}(\phi_{ l-1}(\alpha_{t-1}), \sJ_4(\alpha_{t-1}, \phi_{l-1}(\alpha_{t-1}))$}
\ENDFOR
\STATE $\phi^\star(\alpha_{t-1})\leftarrow\phi_L(\alpha_{t-1})$
\FOR{$i=1, \hdots, N_s$}
\STATE{Sample $\varepsilon_0^{(i)}\sim \mu_0$}
\STATE{Sample $z^{(i)}\sim T^{\alpha_{t-1}}(\varepsilon_0^{(i)})$ }
\STATE{Sample $\varepsilon_\eta^{(i)}\sim\eta$}
\STATE{Compute ${y'}^{(i)} = g \circ F^{\phi^\star(\alpha_{t-1})}(z^{(i)}) + \varepsilon_\eta^{(i)}$}
\STATE{Sample $y^{(i)}\sim\nu^N$}
\ENDFOR
\STATE{Compute $\sJ_3(\alpha_{t-1}, \phi^\star(\alpha_{t-1}))$ from the $N_s$ samples $\{y^{(i)}, {y'}^{(i)}\}_{i=1}^{N_s}$.}
\STATE{$\alpha_t = \text{OPTIMISER}(\alpha_{t-1}, \sJ_3(\alpha_{t-1}, \phi^\star(\alpha_{t-1})))$}
\ENDFOR 
\RETURN $\{\alpha^\star \leftarrow \alpha_T, \phi^\star(\alpha^\star)\leftarrow \phi^\star(\alpha_T)\}$.
\end{algorithmic}
\end{algorithm}

\section{Numerical Results for Darcy Flow}
\label{sec:N}
We now illustrate the performance of our methodology to learn generative models for priors,
based on indirect observations. All our numerical examples are in the setting of equation \eqref{eq:poisson_bc}.  The presented PDE is to be interpreted in the weak sense; see Appendix \ref{app:weak_form_residuals} for a short discussion on the weak form and  approximate computational
methods exploiting GPU-efficient array shifting operations. We consider two classes of 
coefficient function $z:$ the first is a class of piecewise constant functions with discontinuity
sets defined as level sets of a smooth field, in Subsection \ref{ssec:level_sets}; the second
is a class of functions defined as the pointwise exponential of a smooth field, in Subsection \ref{ssec:logn}.
In both cases we will write the prior as pushforward of a Gaussian measure, on the underlying
smooth field, under a parameter-dependent map
$T^\alpha$; we refer to the first class as \emph{level set priors} and the second as
\emph{lognormal priors}. We attempt to learn the parameters $\alpha$ through minimization of either
\ref{eq:ideal_loss2} (using numerical PDE solver) or \ref{eq:ideal_loss3} (using residual-based
operator learning); in all cases we use the framework of Section \ref{sec:M}.
We denote the true measure which we wish to recover by $\mu^\dagger.$

For simple problems regularization may not be needed. For more challenging problems, regularization on $\alpha$ may help with numerical stability, particulariliy when the surrogate model $F^{\phi^\star(\alpha)}$ is concurrently learned. Hence, for the 1D Darcy problems we ommit the regularizer, and for the 2D Darcy examples we use regularization of the form $h(\alpha)=1/({2\sigma_h^2})\|\log(\alpha)-m_h\|^2_2$ in which the log
is applied component-wise; we refer to $\sigma_h$ and $m_h$ as the standard-deviation and mean of the regularizer term. 
We have not found it necessary to regularize the neural operator parameters as we are not interested in recovering a specific value for $\phi^\star(\alpha)$, rather any value which achieves a small error when evaluating the PDE residual of the output is sought.

For all experiments, we use 1000 slicing directions, $\theta$, to evaluate the sliced-Wasserstein term, and we minimize the relevant loss functions using Adam~\cite{kingma2014adam} with a learning rate of $10^{-2}$ decayed by half four times on the parameters $\alpha$, other than where explicitly stated. 
All experiments are run on a 24GB NVIDIA RTX 4090 GPU.

The numerical results we present substantiate the following conclusions:
\begin{itemize}
    \item The proposed methodology using~\ref{eq:ideal_loss2} can effectively recover the true parameters of a level set prior and a lognormal prior.
    \item The proposed methodology using~\ref{eq:ideal_loss2} can effectively recover the true parameters of a level set prior despite using a smoothening of the level set formulation, so that there is prior
    mis-specification.
    \item We highlight the impact of dataset size ($N$), and the number of samples used to estimate the loss ($N_s$), on the quality of the prior parameter estimation; this demonstrates the strength of distributional inference for prior calibration.
    \item The proposed methodology using~\ref{eq:ideal_loss3},~\ref{eq:ideal_loss4} can achieve comparable parameter estimation accuracy to that under~\ref{eq:ideal_loss2}; thus jointly estimating the operator 
    approximation and the parameters of the prior is both feasible and, for reasons of efficiency, will be desirable in settings where $F^\dagger$ is expensive to
    evaluate, is not differentiable or is not available.
    \item We show how, under certain circumstances, prior parameters may be unidentifiable, and we show limitations of the proposed methodology in this setting.
\end{itemize}

\subsection{Level Set Priors}\label{ssec:level_sets}

\begin{figure}[t]\label{fig:spectrum_decay_GRF}
    \centering
     \subfigure[Normalized Spectrum, $\beta=4$]{
     \includegraphics[width=0.35\linewidth]{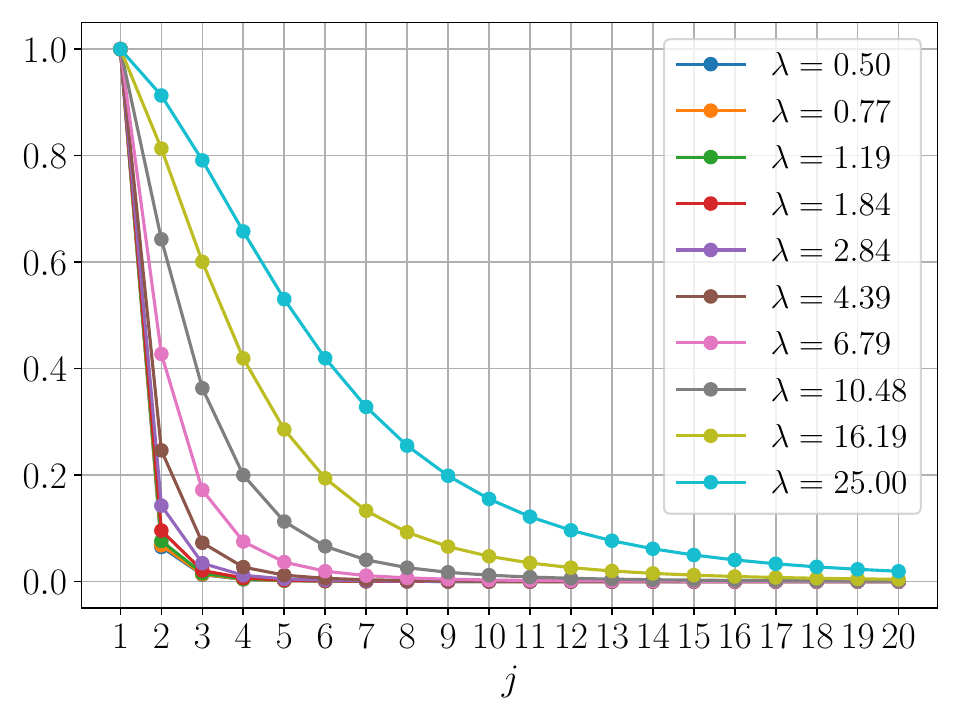}}
    \subfigure[Smoothed level set function]{
    \includegraphics[width=0.35\linewidth]{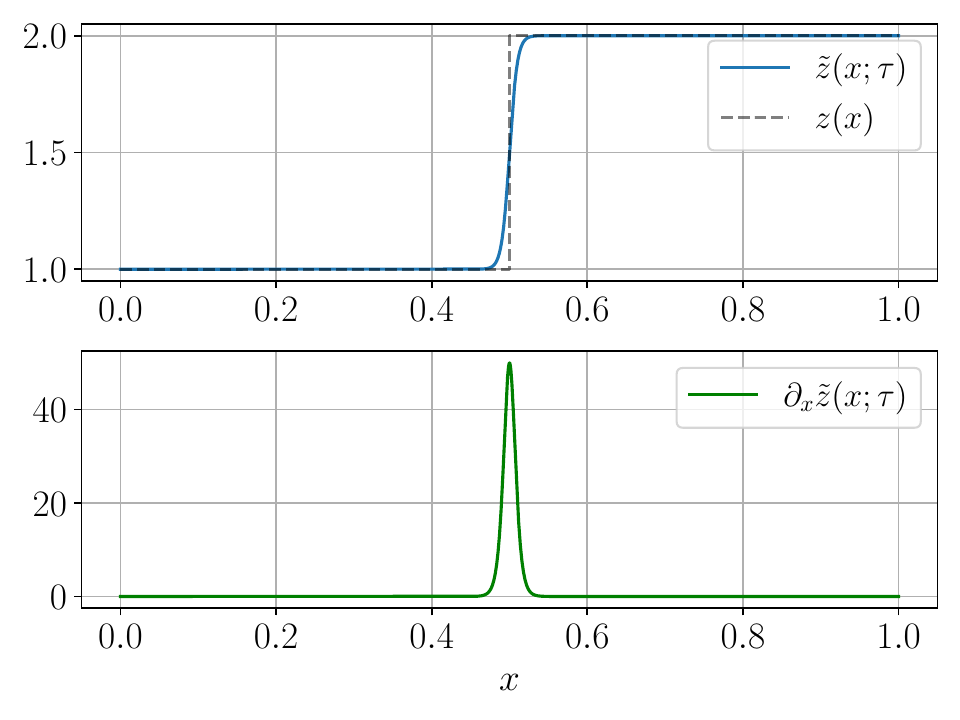}}
    \caption{(a) Spectrum decay of the square root of the eigen values for the covariance operator~\eqref{eqn:GRF_operator} (which corresponds to the standard deviation of the basis expansion coefficient of each mode $\varphi_{j}(x)$) for $\beta=4$ and different choices of lengthscale parameter $\lambda$. As $\lambda$ increases, more modes play a significant role. (b) A comparison of smooth level set function $\tilde{z}$ and sharp level set function $z$ transitioning at $x=0.5$ for $\tau=100$.}
\end{figure}

Prior construction here is based on the methodology introduced in~\cite{dunlop2017hierarchical, iglesias2014well}. We define $\alpha=\{\kappa^-,\kappa^+,\lambda\}$ where
 $\kappa^{\pm} \in (0,\infty)$ and $\lambda>0.$ 
We now define a function $z: C^\infty(D) \times \Rb^2 \to L^\infty(D)$ 
which will take a $\lambda-$dependent function in $C^\infty(D)$, and
the two values $\kappa^{\pm} \in \Rb^2$ to create a function in $L^\infty(D).$
We introduce parameterized family of measures
$(T^\alpha)_{\#}\mu_0$ on $L^\infty(D)$ by pushing forward a Gaussian measure on 
$a \in C^\infty(D)$ under map $T^\alpha$ to define a measure on $z.$ To this end we define $z$ by
\begin{align}
    z(x) &=  \kappa^-\mathbbm{1}_{D^-(a)}(x)+\kappa^+\mathbbm{1}_{D^+(a)}(x),\\
    D^-(a) &= \{x\in D| a(x)<0\}, \quad D^+(a) = \{x\in D| a(x) \ge 0\}.
\end{align}
To complete the description of $T^\alpha$ we construct $a$ as a $\lambda-$dependent Gaussian random field.
To do this we fix a scalar $\beta>0.$ The construction of $a$ differs in details between dimensions one and two. For a 1D physical domain $D$, we define the $\lambda-$dependent
Gaussian measure on $a$ through the Karhunen-Lo\`eve expansion
\begin{align*}
    a(x;\lambda,\beta) &= \sum_{j=1}^{J} \Bigl(j^2\pi^2+\lambda^2\Bigr)^{-\beta/2} \varepsilon_j \varphi_{j}(x), \quad \varepsilon_j \sim \NPDF(0, 1)\; \mathrm{i.i.d,}
\end{align*}
where $\varphi_{j}(x) = \cos(j\pi x).$
In dimension two we generalize to obtain
\begin{align*}
    a(x;\lambda,\beta) &= \sum_{j=1, k=1}^{J, K} \Bigl((j^2 + k^2)\pi^2+\lambda^2\Bigr)^{-\beta/2} \varepsilon_{j,k} \varphi_{j,k}(x), \quad \varepsilon_{j,k} \sim \NPDF(0, 1)\; \mathrm{i.i.d},
\end{align*}
with $\varphi_{jk}(x) = \cos(j\pi x_{(1)}) \cos(k\pi x_{(2)}).$
For all experiments, we fix $\beta = 4$. 
\begin{figure}[t]\label{fig:Data_1D_PriorCal}
    \centering
    \subfigure[several $u$ and $y$]{
    \includegraphics[width=0.24\linewidth]{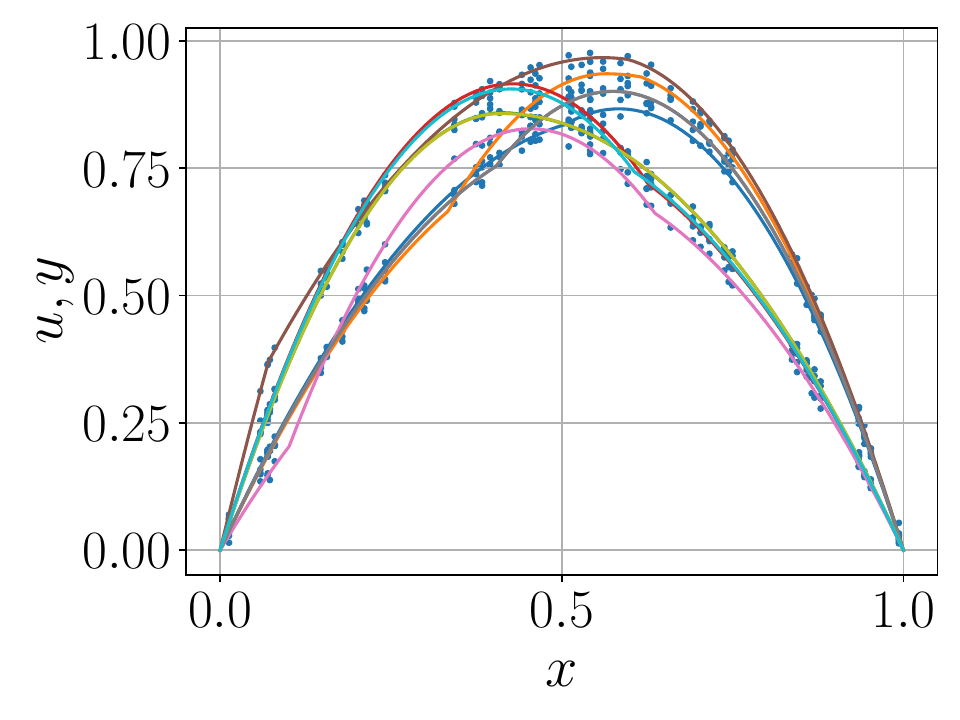}}
    \hspace{-.5em}
    \subfigure[Random fields $\bar{a}$]{\includegraphics[width=0.24\linewidth]{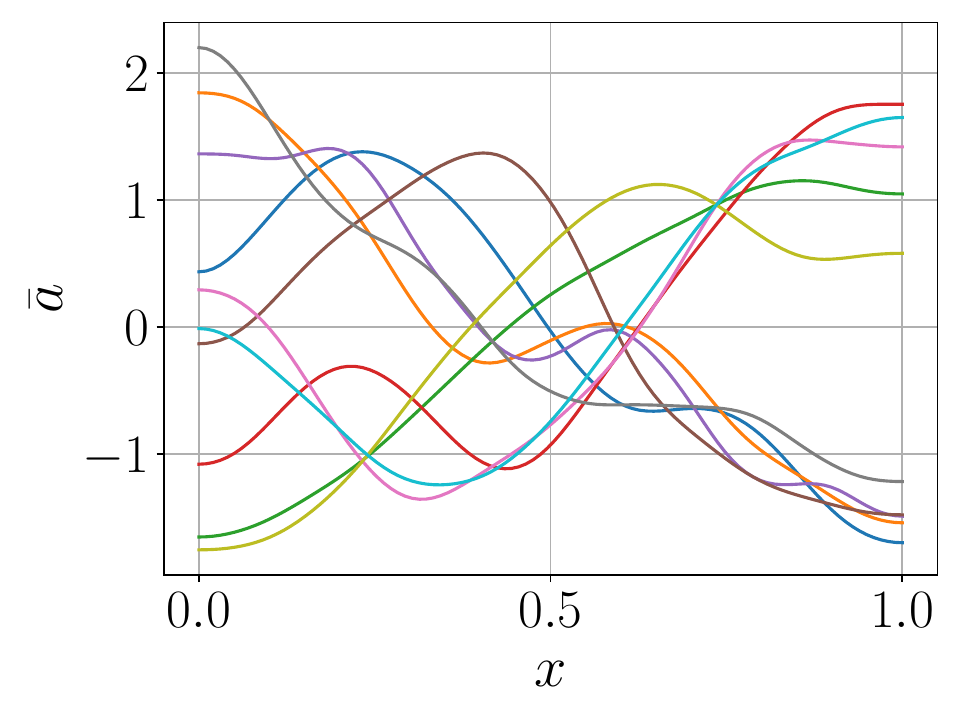}}
    \hspace{-.5em}
    \subfigure[Sharp $z$]{\includegraphics[width=0.24\linewidth]{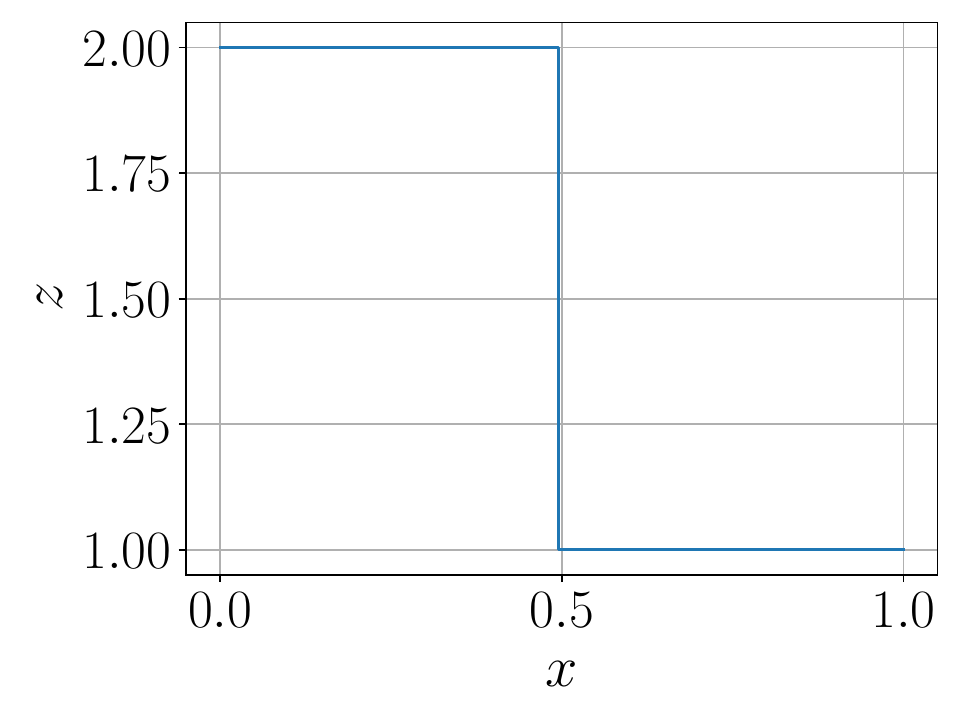}}
    \hspace{-.5em}
    \subfigure[Smooth $z$, $\tau=10$]{\includegraphics[width=0.24\linewidth]{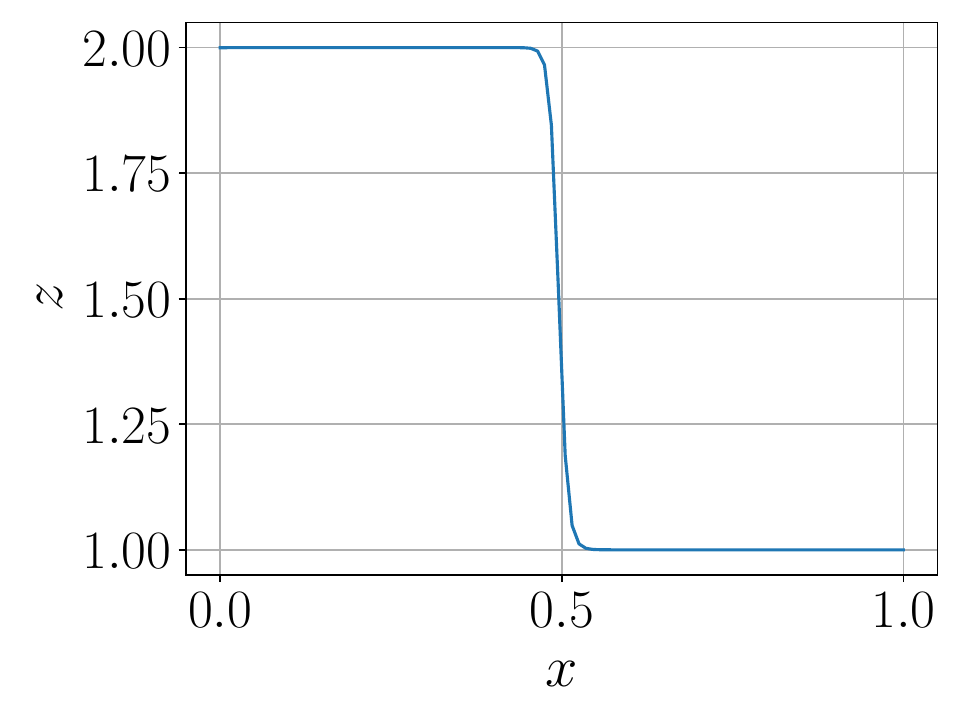}}
    \caption{(a)-(c) Data Generated from sharp prior $\mu^\dagger$, which are the PDE solution fields with the observational data at points, the normalized GRF $\bar{a}$ samples, and one example of a level set function resulting from one of the $\bar{a}$ fields. (d) The same diffusivity field $\tilde{z}$  as in (c), but smoothed with~\eqref{eqn:smooth_levelset_function} and $\tau=10$.}
\end{figure}

Note that, for $J=\infty$ in dimension one (resp. 
$(J,K)=\infty$ in dimension two)
$a \sim \NPDF(0,C_{\lambda,\beta})$ where
\begin{align}\label{eqn:GRF_operator}
    C^{-1}_{\lambda,\beta}=\bigl(-\Delta+\lambda^2 \I\bigr)^\beta,
\end{align}
and $-\Delta$ is the Laplacian equipped with homogeneous Neumann
boundary conditions. Draws from this measure have H\"older regularity up to
exponent $\beta-d/2$ where $d$ is dimension of domain $d.$
This statement about the probability measure from which $a$ is drawn is approximate when $J$ is (resp. $(J,K)$ are) finite.  Because of the countable nature of the construction of the level set prior, it lies in
a separable subspace $Z$ of $L^\infty(D).$ 
The derivative of the objective functional with respect to $\lambda$ is not well behaved for the \textit{sharp} level set setup as $\partial_{a}{z}$ is zero almost everywhere.
Hence we introduce a \textit{smoothened} level set parametrization of $z(\alpha)$ as 
\begin{align}\label{eqn:smooth_levelset_function}
    \tilde{z}(x; \tau) &= \frac{1}{2}\tanh(\tau\,\bar{a})(\kappa^+ - \kappa^-) + \frac{1}{2}(\kappa^+ - \kappa^-) + \kappa^-,\\
    \bar{a} &= a/{\|a\|_{L^2(D)}},
\end{align}
where the parameter $\tau$ controls the sharpness of transition from $\kappa^-$ to $\kappa^+$. 
In Fig.~\ref{fig:spectrum_decay_GRF}(a) we show the spectrum decay of the GRF generated from the covariance operator \eqref{eqn:GRF_operator} in dimension one.
Figure~\ref{fig:spectrum_decay_GRF}(b) shows the smoothing of the sharp level set function $z$ and its spatial derivative. In Fig.~\ref{fig:Data_1D_PriorCal} we show 10 sampled random fields $\bar{a}$ and the associated PDE solutions for the 1D Darcy problem. We only show one diffusion field $z$ for clarity.
\begin{figure}[t]\label{fig:convergence_1D_PriorCal}
    \centering
    \subfigure[Convergence $\mu^\dagger$]{\includegraphics[width=0.48\linewidth]{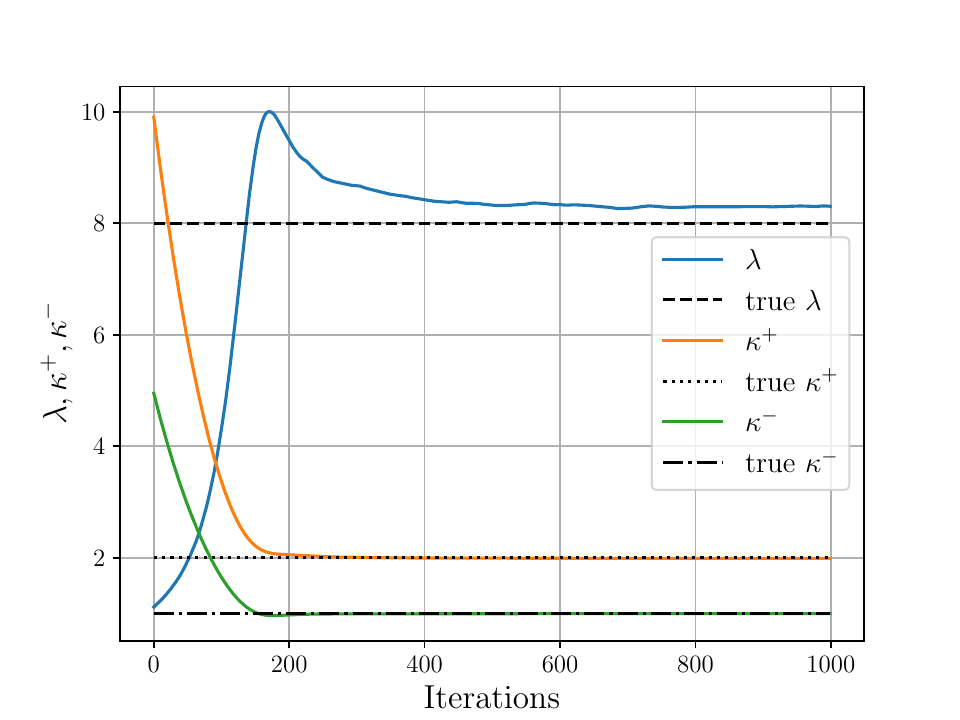}}
    \subfigure[Loss $\mu^\dagger$]{\includegraphics[width=0.48\linewidth]{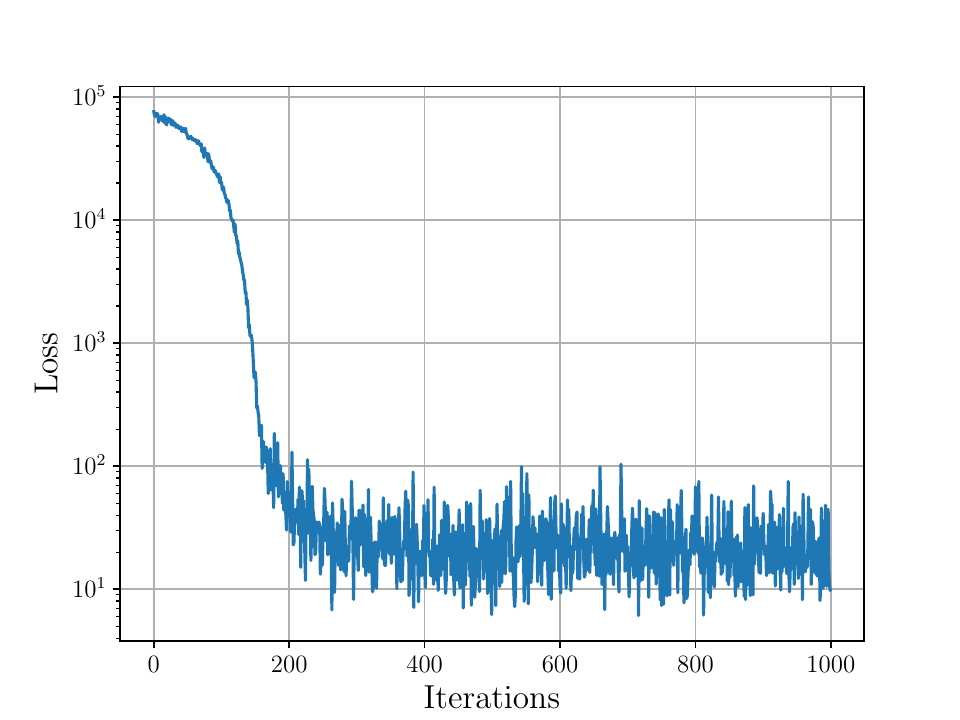}}\\
     \subfigure[Convergence $\tilde{\mu}^\dagger$]{\includegraphics[width=0.48\linewidth]{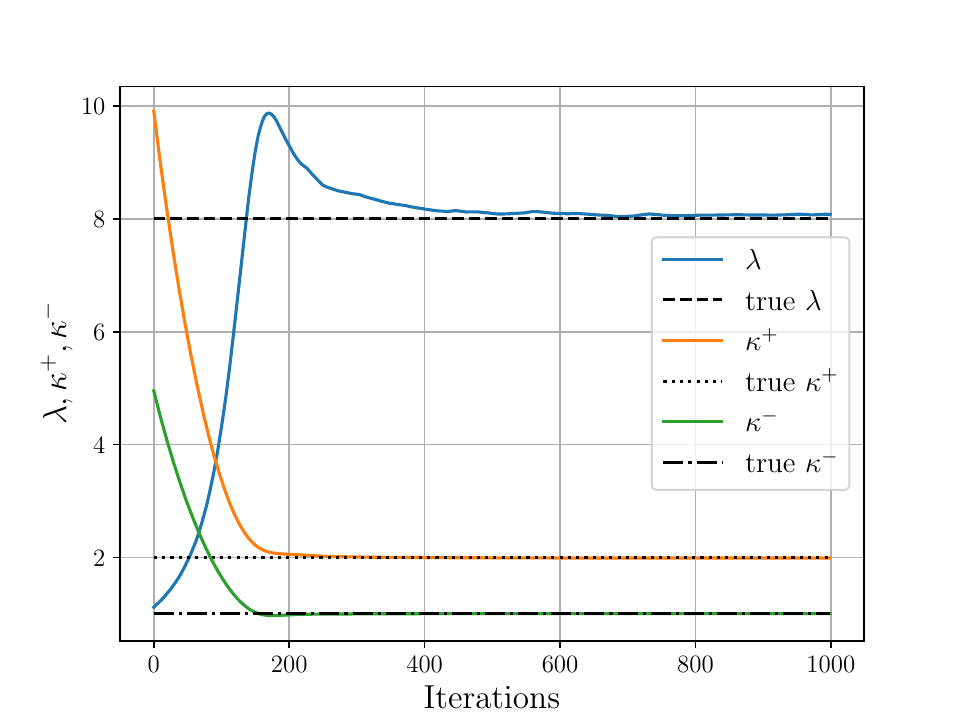}}
    \subfigure[Loss $\tilde{\mu}^\dagger$]{\includegraphics[width=0.48\linewidth]{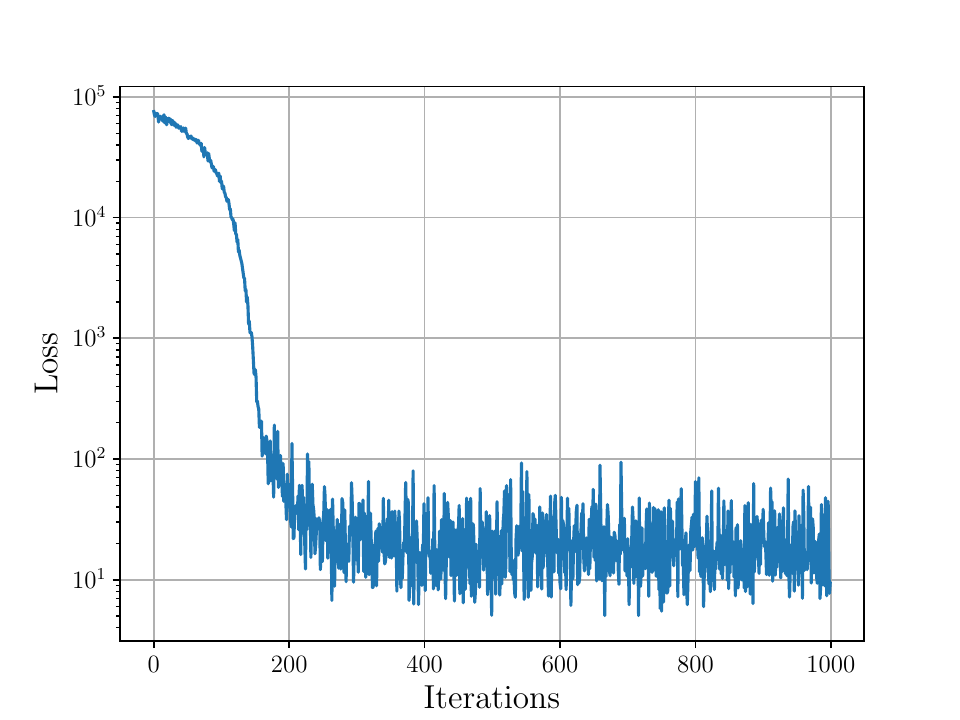}}
    \caption{Comparison of convergence of $\alpha=\{\kappa^\pm, \lambda\}$ for data generated from the physically realistic sharp $\mu^\dagger$ and the smoothed $\tilde{\mu}^\dagger$ for 1D Darcy with~\ref{eq:ideal_loss2}. We plot in (a) and (c) the convergence of the parameters themselves, and in (b) and (d) the loss function values.}
\end{figure}
We note that there is a symmetry in inference between $\kappa^+$ and $\kappa^-$, so after convergence we sort the $\kappa^\pm$ to associate $\kappa^+$ with the larger value and vice versa for $\kappa^-$. 
The number of physical systems from which we have data, $N$, is set to 1000 for the subsequent examples as this is shown to give very accurate results whilst highlighting the main benefit of the methodology -- to efficiently and accurately estimate underlying distributional parameters from large amounts of data.
%%%%
        
%%%%
With this experimental setup established, we can test the proposed methodology.
Subsections \ref{sssec:511} and \ref{sssec:512} concern the setting where we use a PDE
solve for the forward model, and hence minimize~\ref{eq:ideal_loss2}; we consider dimensions $d=1$
and $d=2$ respectively. In Subsections \ref{sssec:513} and \ref{sssec:514} we combine learning of $\alpha$
with operator learning to replace the PDE solve, and hence minimize~\ref{eq:ideal_loss3}, \ref{eq:ideal_loss3}; we again consider dimensions $d=1$ and $d=2$ respectively.

\subsubsection{Prior Calibration: 1D Darcy}
\label{sssec:511}
In this section we focus our attention on using~\ref{eq:ideal_loss2} to infer parameters of a prior $(T^\alpha)_{\#}\mu_0$ given an empirical measure $\nu^N$ and a forward operator $F^\dagger$.
\begin{figure}[t]\label{fig:convergence_1D_PriorCal_varBatch}
    \centering
    \subfigure[Convergence for $\lambda$]{\includegraphics[width=0.33\linewidth]{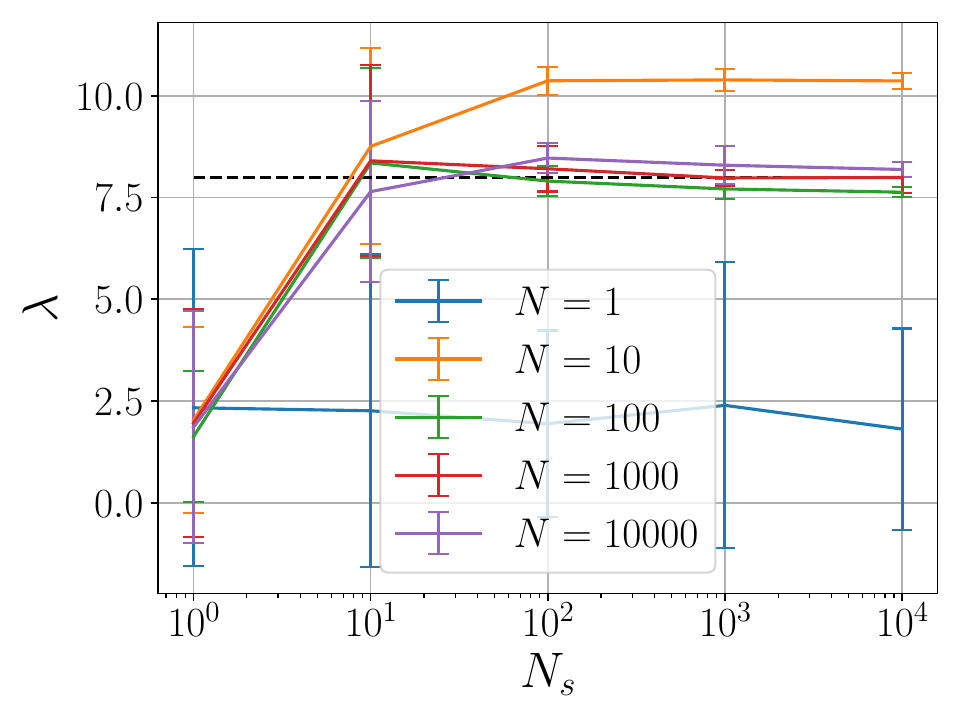}}
    \hspace{-0.5em}
     \subfigure[Convergence for $\kappa^+$]{\includegraphics[width=0.33\linewidth]{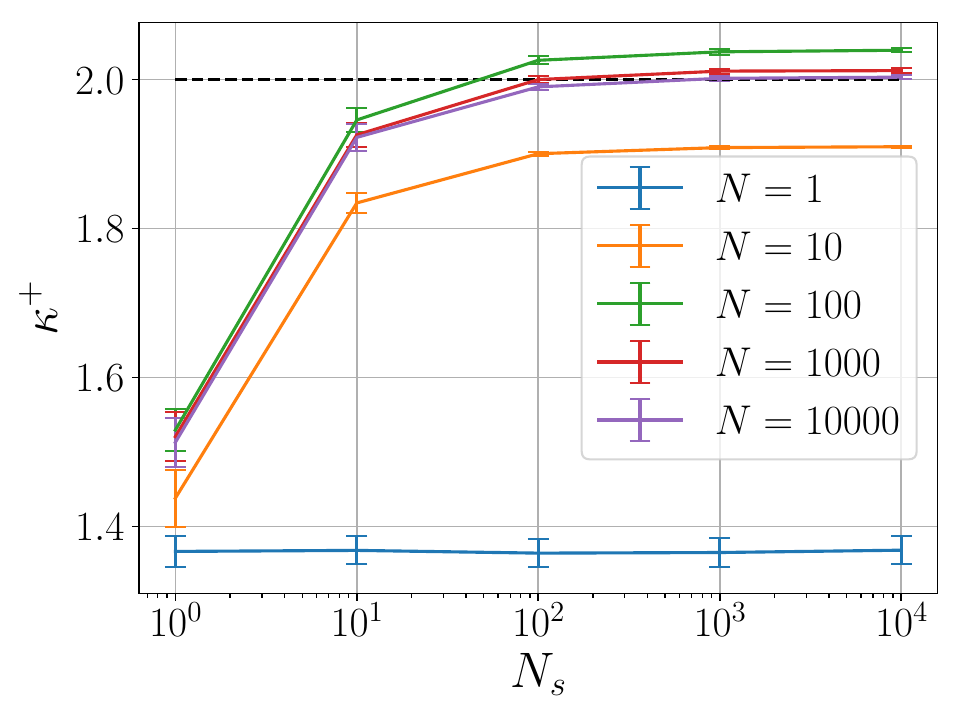}}
     \hspace{-0.5em}
     \subfigure[Convergence for $\kappa^-$]{\includegraphics[width=0.33\linewidth]{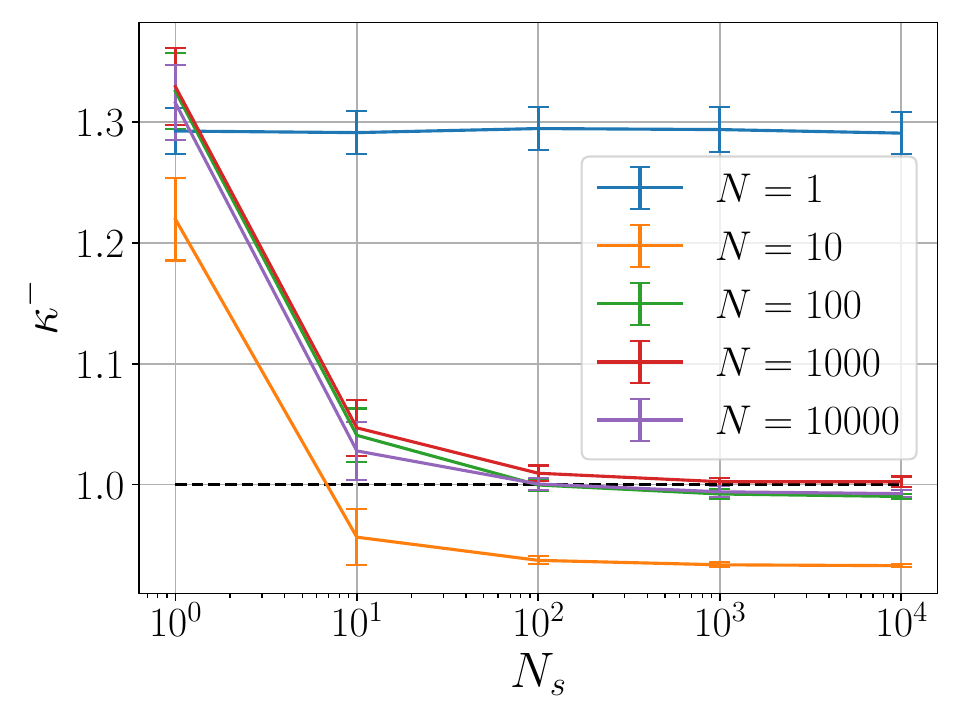}}
     \caption{Converged parameter estimation for $\lambda, \kappa^+, \kappa^-$ individually, for different dataset sizes $N$, and number of samples $N_s$ for~\ref{eq:ideal_loss2} on the 1D Darcy problem.}
\end{figure}
We use $N=1000$ data $y$ each of $d_y=50$ noisy pointwise observations of the solution with $\Gamma = 0.01^2\I$ observational noise covariance. The prior KL expansion is truncated at 20 terms. We use $N_s=1000$ samples to evaluate \ref{eq:ideal_loss2}. The true parameters for data generation are $\lambda=8, \kappa^+=2, \kappa^-=1$ and we set the level set smoothing parameter $\tau=10$.
We use a finite element mesh of 100 nodes with a weak form residual computed through array shifting and solved with conjugate gradients and the forcing function $f$ is set to a constant value of 10 for all examples. No regularizer $h(\alpha)$ is used for the 1D example.
We minimize using the smoothed model without sharp interfaces but we derive data both with the sharp and smoothed interfaces. In both cases we are able to accurately recover all three hyper
parameters in $\alpha$: see Fig.~\ref{fig:convergence_1D_PriorCal}. The runtime for these experiments is 41 seconds for the full 1k iterations. The relative error of the predicted parameters for the smoothed level set prior are $0.56\%$, $0.28\%$, and $0.96\%$ for $\kappa^+$, $\kappa^-$, and $\lambda$ respectively. We observe that the mis-specification between $\mu^\dagger$ (draws are discontinuous) and $(T^\alpha)_{\#}\mu_0$
(draws are smooth) is reflected in a small inference bias, as would be expected.
In Fig.~\ref{fig:convergence_1D_PriorCal_varBatch} we plot the prior parameter estimation  mean and standard deviations for 100 random initializations, varying the number of samples $N_s$ used for estimating the loss functions and the size of the dataset ($N$). The $\alpha$ 
parameters are initialized from $\log \lambda \sim \mathrm{Unif}(\log 0.5, \log 4)$, $\log \kappa^- \sim \mathrm{Unif}(\log 0.5, \log 4)$, and $\log \kappa^- \sim \mathrm{Unif}(\log 6, \log 10)$. We observe that the mean of the recovered parameters roughly converges for this problem setup after a dataset size $N=100$ and number of samples $N_s=100$. 

\subsubsection{Prior Calibration: 2D Darcy}
\label{sssec:512}
In this section we generalize the setting of the previous section to the two-dimensional domain $D = [0, 1]\times[0,1]$. As in the 1D example, we use $N=1000$ data $y$ each of $d_y=50$ noisy pointwise observations of the solution with $\Gamma = 0.01^2\I$ observational noise covariance. The prior KL expansion is truncated at 20 terms in each dimension, for a total of 400 terms. We use $N_s=100$ samples to evaluate \ref{eq:ideal_loss2}. The true parameters for data generation are $\lambda=5, \kappa^+=2, \kappa^-=1$ and we set the level set smoothing parameter $\tau=5$.
We use a finite element mesh of $100\times100$ nodes. The regularizer $h(\alpha)$ in \ref{eq:ideal_loss2} has means $m_{h, \lambda}=\log(10)$ and $m_{h, \kappa^\pm}=\log(3)$, with standard-deviations $\sigma_h=2$.
In Fig.~\ref{fig:2D_diffusion_fields_and_slns}, two permeability fields associated with their sets of observations from the dataset are shown. 
\begin{figure}[]\label{fig:2D_diffusion_fields_and_slns}
    \centering
    \subfigure[$z$ field]{\includegraphics[width=0.32\linewidth]{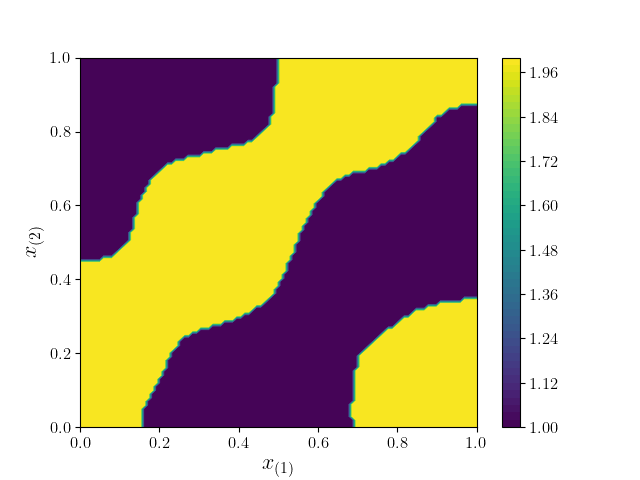}}
    \hspace{-1.5em}
    \subfigure[Smoothed $z$ field]{\includegraphics[width=0.32\linewidth]{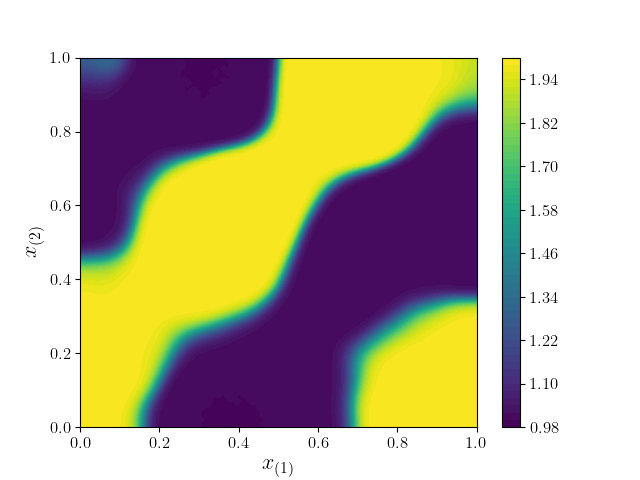}}
    \hspace{-1.5em}
    \subfigure[$u$ and 50 observations]{\includegraphics[width=0.32\linewidth]{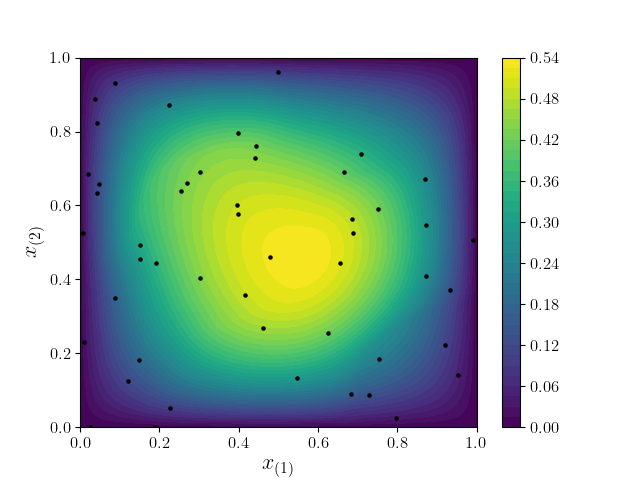}}
    \caption{Plots of a sharp permeability fields and the smoothed field and PDE solutions with observation locations in the dataset for the 2D Darcy problems 
    We note that the solutions are computed from the sharp level sets for the dataset, whereas $(T^\alpha)_{\#}\mu_0$ in the loss function uses the smoothed level sets to make use of gradient-based optimization.}
\end{figure} 
\begin{figure}[]\label{fig:convergence_2D_PriorCal}
    \centering
     \subfigure[$\alpha$ convergence]{\includegraphics[width=0.48\linewidth]{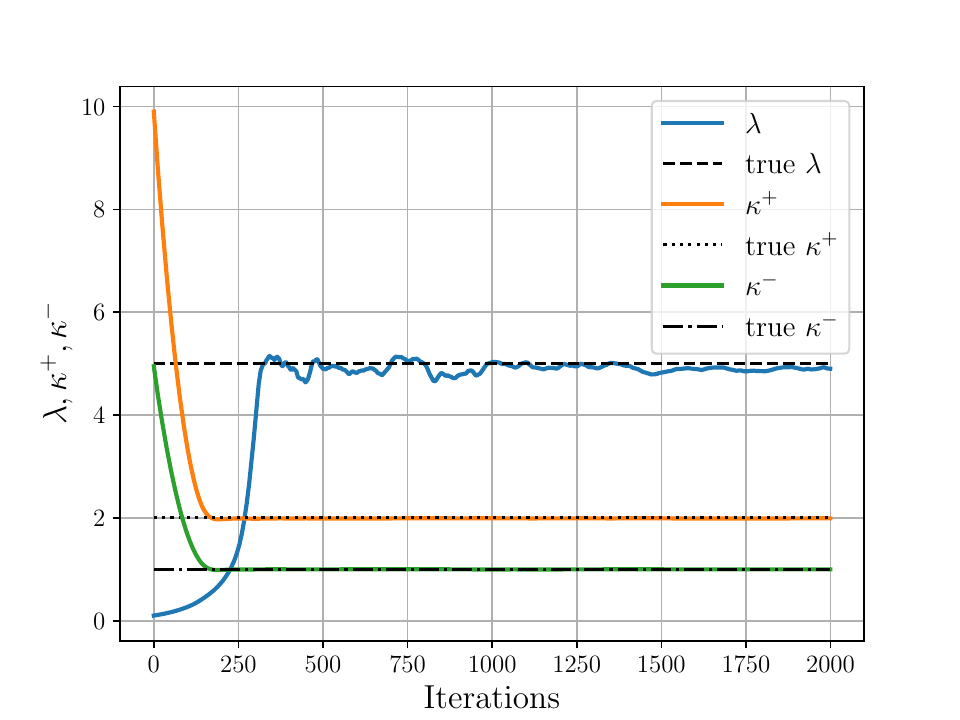}}
     \subfigure[Losses]{\includegraphics[width=0.48\linewidth]{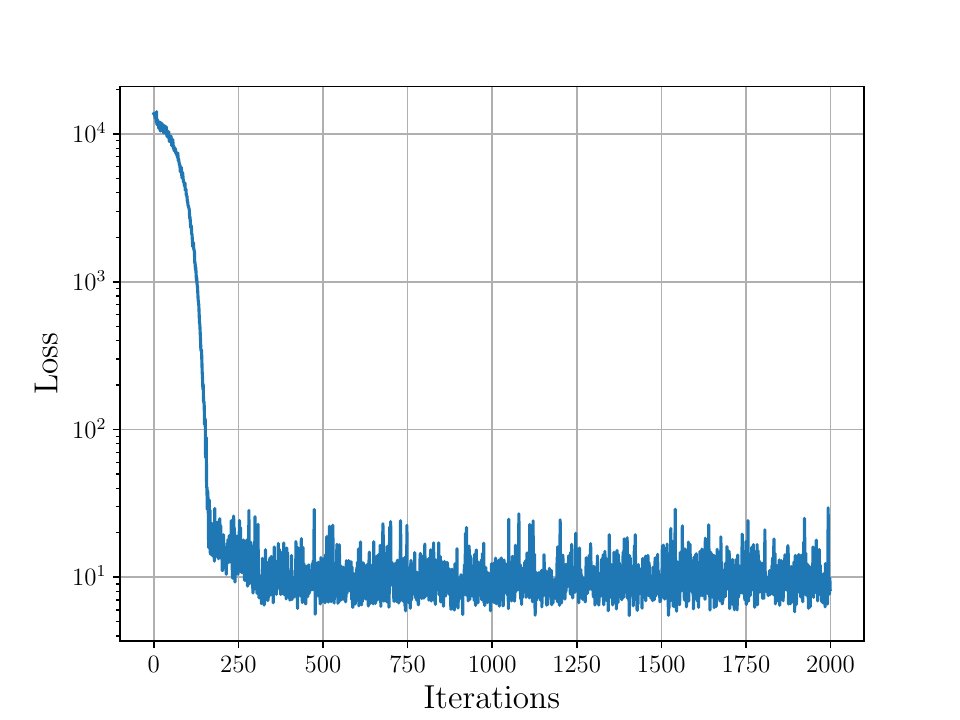}}
    \caption{Convergence of prior parameters $\alpha$ for 2D Darcy trained with~\ref{eq:ideal_loss2}.}
\end{figure}
Fig.~\ref{fig:convergence_2D_PriorCal} shows the convergence plots for prior parameter calibration. Inference time was 1.52 minutes for the 2k iterations. The relative error of the predicted parameters are $0.39\%$, $0.03\%$, and $1.96\%$ for $\kappa^+$, $\kappa^-$, and $\lambda$ respectively.

\subsubsection{Prior Calibration and Operator Learning: 1D Darcy}
\label{sssec:513}
\begin{figure}[t]\label{fig:convergence_1D_Prior_Calib_Op}
    \centering
    \subfigure[$\alpha$ convergence]{
    \includegraphics[width=0.48\linewidth]{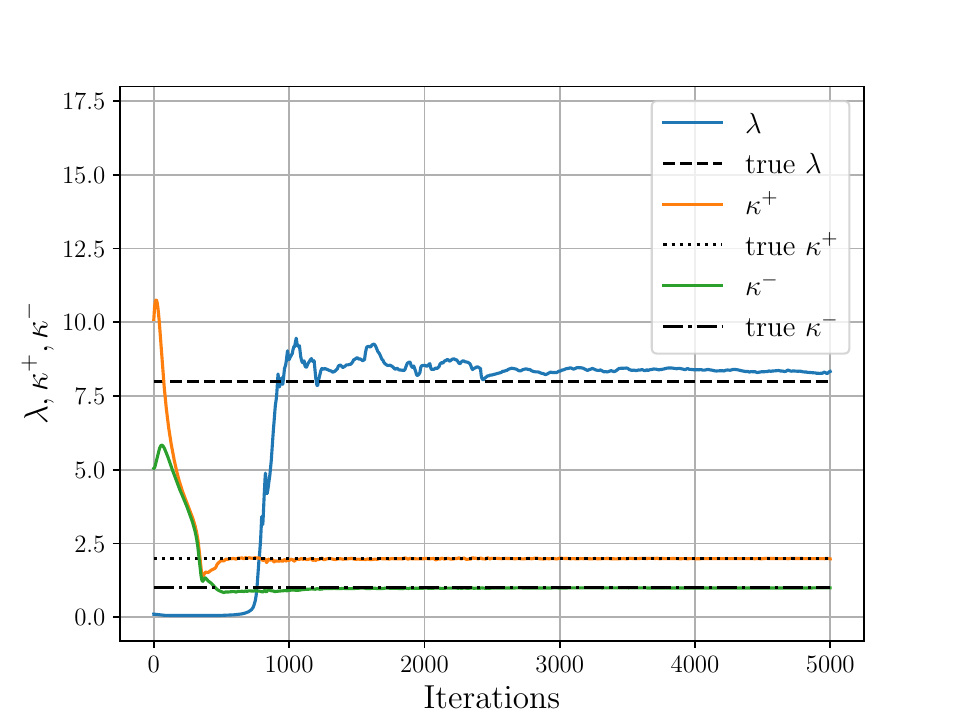}}
    \subfigure[Losses]{\includegraphics[width=0.48\linewidth]{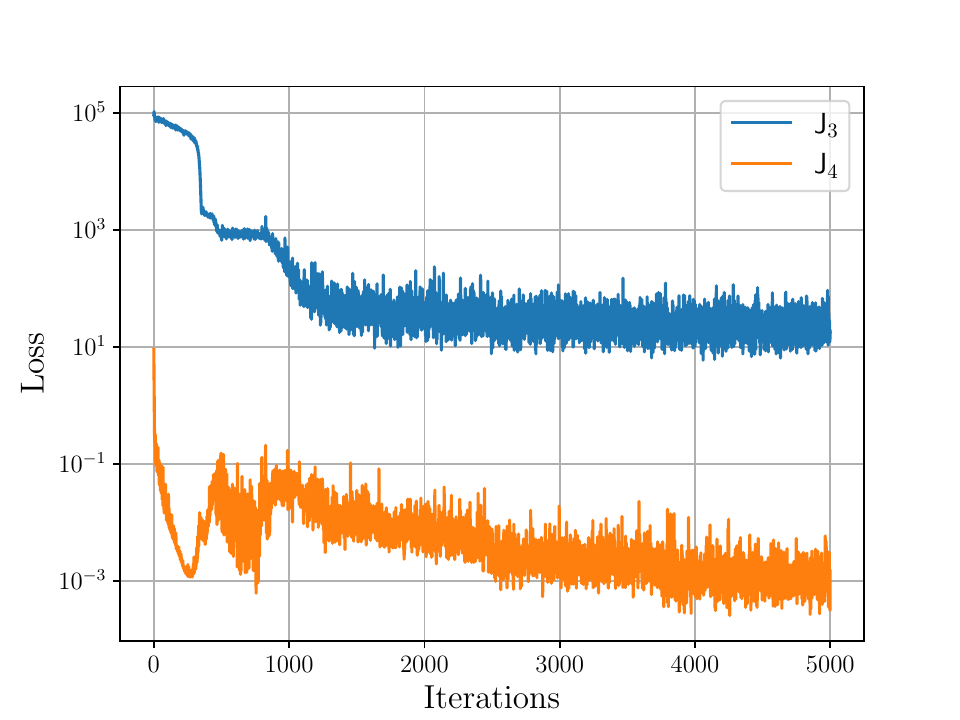}}
    \caption{Convergence of prior parameters $\alpha$, with learned operator $F^\phi$ on 1D Darcy with~\ref{eq:ideal_loss3} and \ref{eq:ideal_loss4}. We take 10 parameter updates steps on $\phi$ in the lower objective of the bilevel optimization for every step on $\alpha$. }
\end{figure}
We now turn our attention to the task of using~\ref{eq:ideal_loss3} and \ref{eq:ideal_loss4} to jointly estimate $(T^\alpha)_{\#}\mu_0$ and learn a parametrized operator $F^\phi$. We choose this operator to be a 4-layer Fourier Neural Operator~\cite{li2020fourier} with 64 dimensional channel space and 8 Fourier modes and Silu activation functions~\cite{hendrycks2016gaussian} ($\mathbf{\sigma}(x)=x\,\mathrm{sigmoid}(x)$). The last layer performs a point-wise multiplication with a function that is zero on the boundary ($\sin( \pi x)$). \ref{eq:ideal_loss4} is evaluated with $N_r=20$ samples and~\ref{eq:ideal_loss3} is evaluated with $N_s=1000$ samples. We perform 10 update steps on $\phi$ per $\alpha$ update in the lower-level optimization routine. We use the same data setup, prior, and regularizer as in Subsection~\ref{sssec:511}.
In Fig.~\ref{fig:convergence_1D_Prior_Calib_Op} we show that we are able to jointly learn the neural operator
approximation of the forward model and the unknown parameters. The runtime is 2.44 minutes. The relative error for the FNO predictions against the solver for the data samples $z\sim\mu^\dagger$ from the true sharp level set prior is 0.40\%. The relative error of the predicted parameters are $1.13\%$, $0.71\%$, and $4.18\%$ for $\kappa^+$, $\kappa^-$, and $\lambda$ respectively.
\subsubsection{Prior Calibration and Operator Learning: 2D Darcy}
\label{sssec:514}
\begin{figure}
    \centering
    \subfigure[$\alpha$ convergence with $\phi^\star(\alpha)$]{\includegraphics[width=0.48\linewidth]{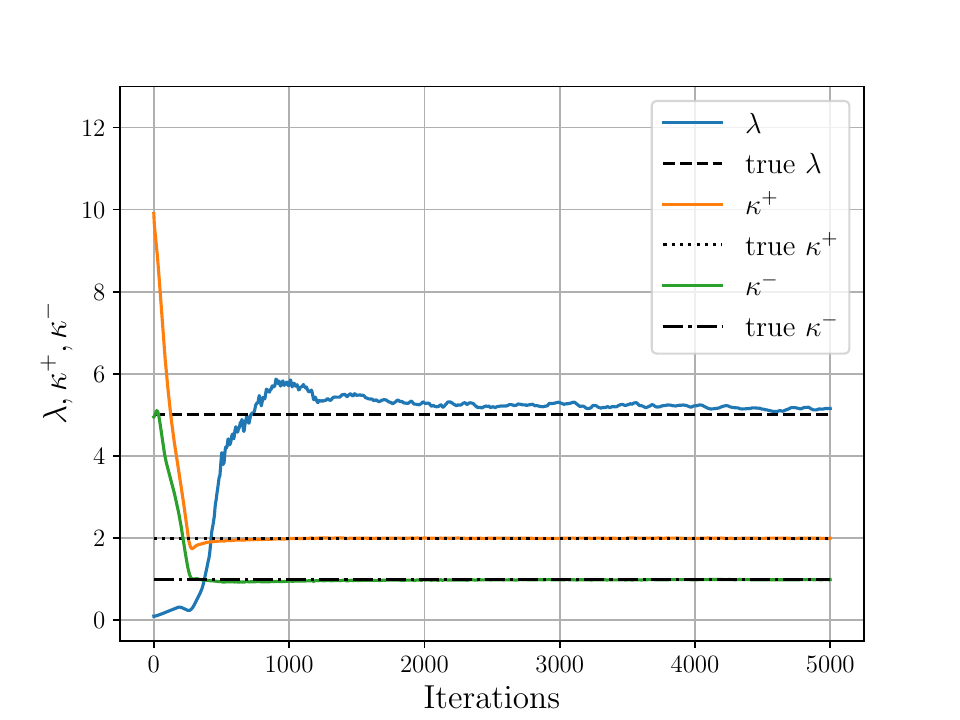}}
    \subfigure[Losses]{\includegraphics[width=0.48\linewidth]{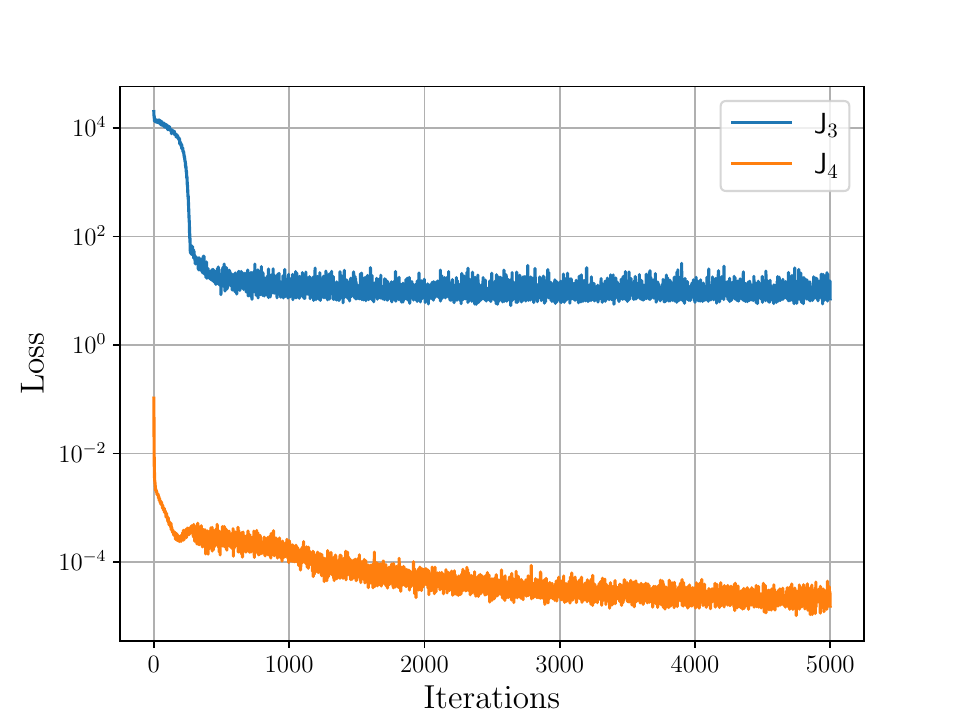}}
    \caption{Convergence of prior parameters $\alpha$ for the level set prior on the 2D Darcy problem with jointly learned operator $F^\phi$. For every step of prior update $\alpha$, 10 parameter updates are performed on $\phi$.}
    \label{fi_convergence_prior_op}
\end{figure}
We now apply the joint prior-operator learning methodology to the 2D setting, keeping the same FNO parameters as in 1D case (Subsection ~\ref{sssec:513}) but for 2 dimensional input fields. The last layer of the FNO is now $\sin(\pi x_{(1)})\sin(\pi x_{(2)})$. We use the same data setup, prior, and regularizer as in Subsection~\ref{sssec:512} and $N_r=20$, $N_s=100$.
Figure~\ref{fi_convergence_prior_op} shows the convergence for the learning of both $(T^\alpha)_{\#}\mu_0$ and $F^\phi$. The model converged accurately after a runtime of 56.46 minutes. The relative error for the FNO predictions against the solver for the data samples $z\sim\mu^\dagger$ from the true sharp level set prior is of 0.73\%. The relative error of the predicted parameters are $0.10\%$, $0.98\%$, and $3.19\%$ for $\kappa^+$, $\kappa^-$, and $\lambda$ respectively.

\subsection{Lognormal Priors}
\label{ssec:logn}
In this subsection we focus on learning parameters of a lognormal prior. We introduce a 
Mat\'ern-like field $a$ with regularity $\nu$, amplitude $\sigma$, and length-scale $\ell$; to be specific
we generate a centred Gaussian random field (GRF) with covariance operator
\begin{align}
    C_{\sigma,\ell, \nu} = \gamma\ell^d(-\ell^2\Delta + \I )^{-\nu-d/2},\nonumber
\end{align}
setting
\begin{align}
    \gamma = \sigma^2\frac{2^d\pi^{d/2}\Gamma(\nu+d/2)}{\Gamma(\nu)},\nonumber
\end{align}
where $\Gamma(\cdot)$ is the Gamma function (not to be confused with our use of $\Gamma$ as the observational noise covariance).
In practice we truncate a Karhunen-Loeve expansion in the form
\begin{align}
    a(x;\sigma, \ell, \nu) = \sum_{j=1, k=1}^{J, K} \sqrt{\gamma\ell^d\left(\ell^2(j^2+k^2)\pi^2+1\right)^{-\nu-d/2}}\varepsilon_{j,k}\varphi_{j,k}(x), \;\; \varepsilon_{j,k}\sim \NPDF(0, 1)\mathrm{i.i.d.}\nonumber
\end{align}
We then set $z=\exp(a).$
Again, because of the countable nature of the construction of the lognormal prior, it lies in
a separable subspace $Z$ of $L^\infty(D).$
We focus on estimating $\alpha=\{\nu, \ell\}$ which represent the regularity and length-scale of the lognormal permeability field $z$. We do not attempt to jointly infer the amplitude $\sigma$ as this induces a lack of identifiability of parameters at the level of the Karhunen-Lo\`eve expansion spectrum. We note that, as the amplitude parameter does not contain much spatially dependent information,  it can be accurately estimated by other means, hence here it set to $\sigma=1$. We attempt to jointly learn $\alpha=\{\nu, \ell\}$
and $\phi$ the parameters of a residual-based neural operator approximation. In Subsection \ref{sssec:521}
we consider a setting where the regularity and lengthscale parameters are identifiable and show successful
joint learning of $(\alpha,\phi).$ However it is intuitive that the regularity and lengthscale may not
be separately identifiable; we show in Subsection \ref{sssec:522} that in such situations the entanglement
of jointly learning $(\alpha,\phi)$ can cause convergence problems with the proposed methodology.

\begin{figure}
    \centering
    \subfigure[$a$]{\includegraphics[width=0.32\linewidth]{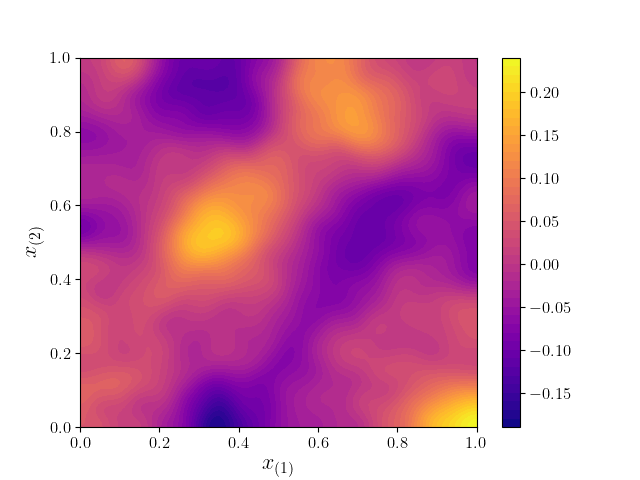}}
    \subfigure[$z$]{\includegraphics[width=0.32\linewidth]{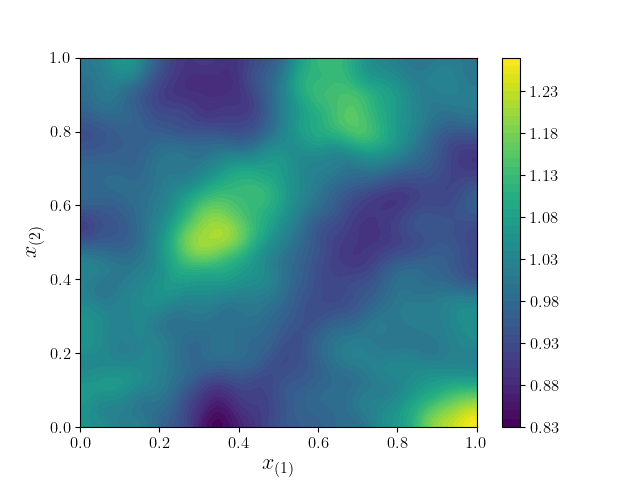}}
    \subfigure[$u$ and 50 observations]{\includegraphics[width=0.32\linewidth]{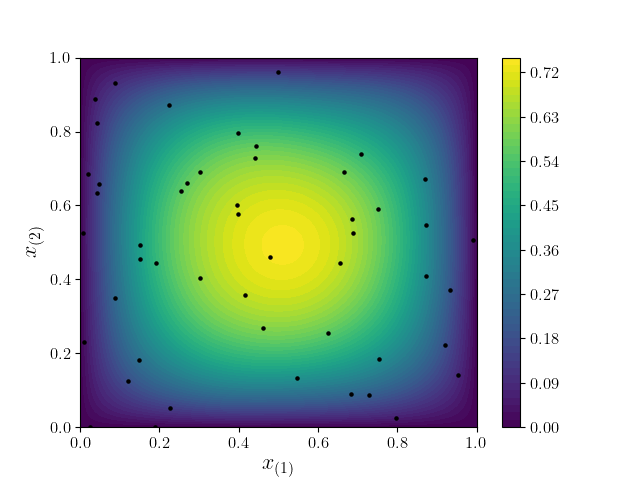}}
    \caption{(a) A sampled GRF $a$, (b) the exponentiated random field $z$, (c) the solution field associated to $z$ with the 50 observation locations.}
    \label{fig:WM_samples}
\end{figure}
\begin{figure}
    \centering
    \subfigure[$\alpha$ convergence]{\includegraphics[width=0.48\linewidth]{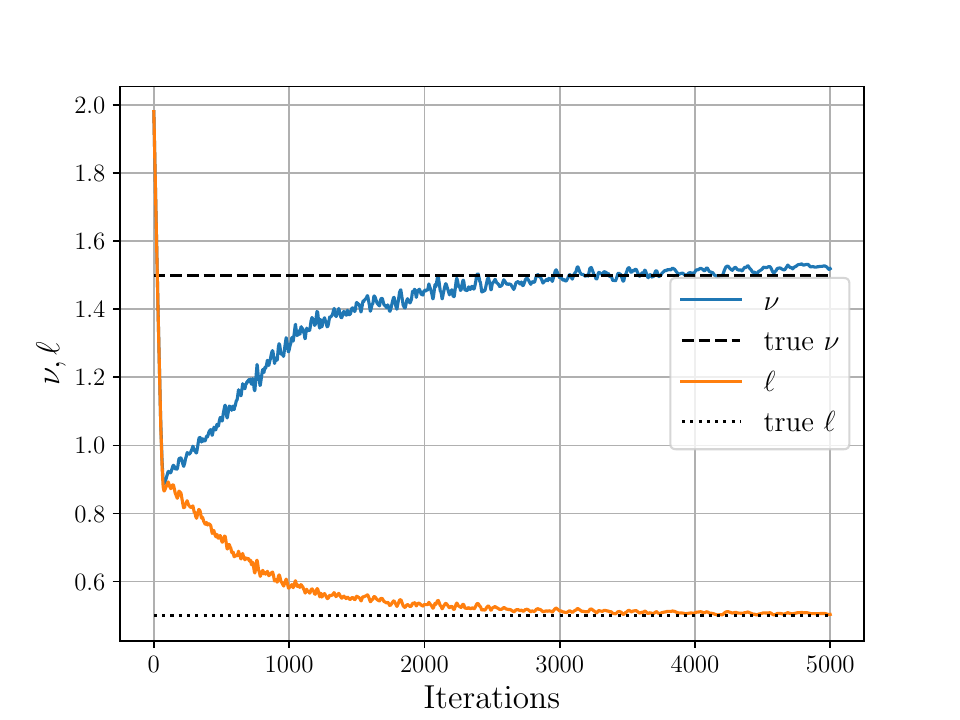}}
    \subfigure[Loss]{\includegraphics[width=0.48\linewidth]{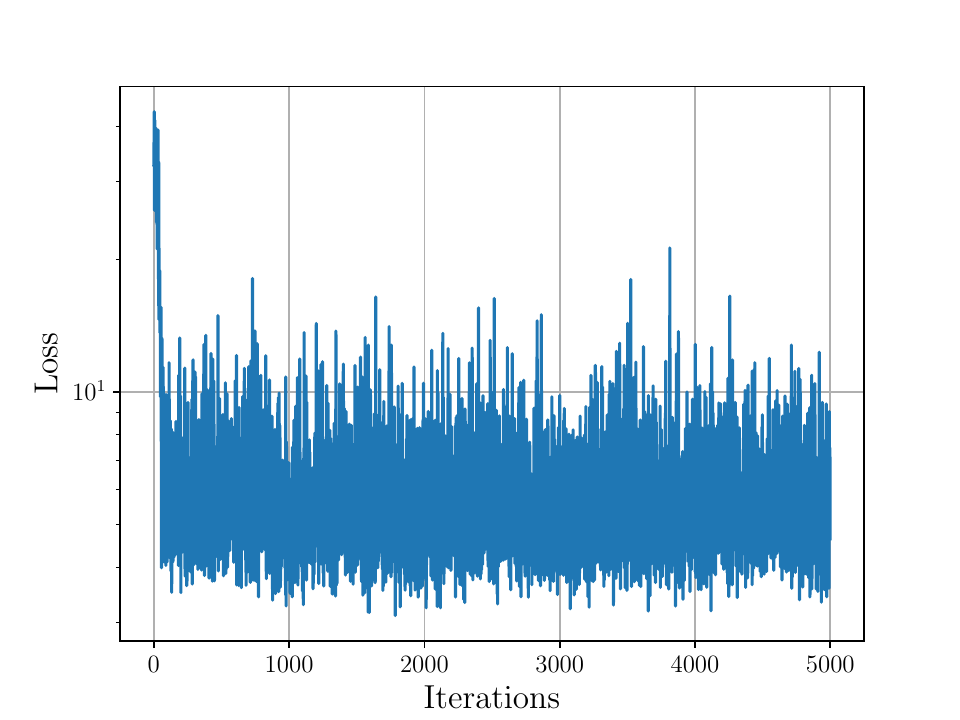}}
    \subfigure[$\alpha$ convergence with $\phi^\star(\alpha)$]{\includegraphics[width=0.48\linewidth]{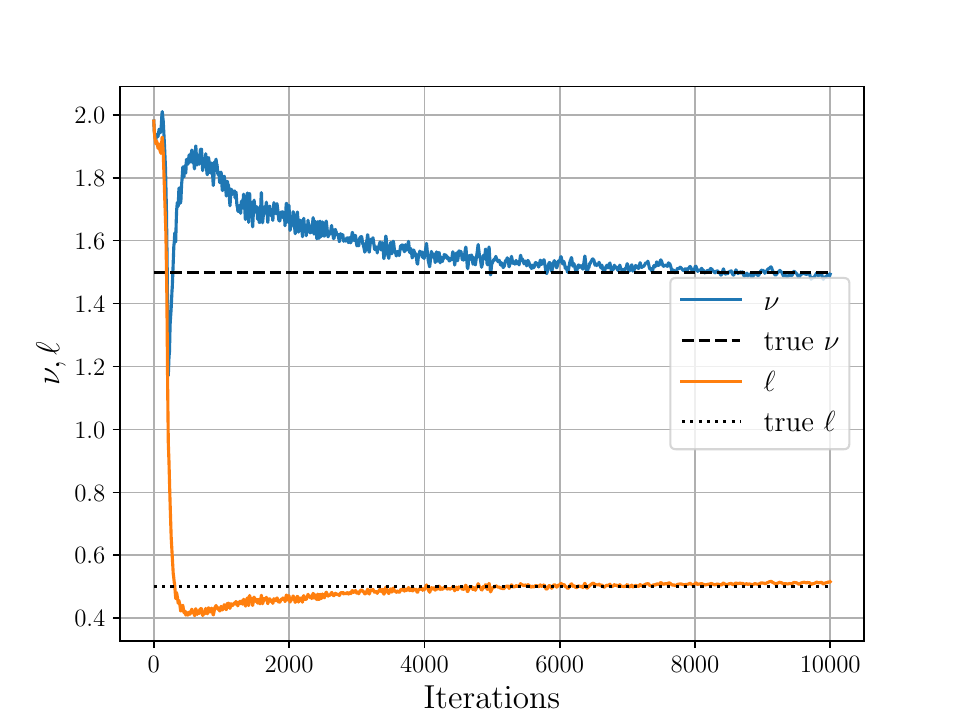}}
    \subfigure[Losses]{\includegraphics[width=0.48\linewidth]{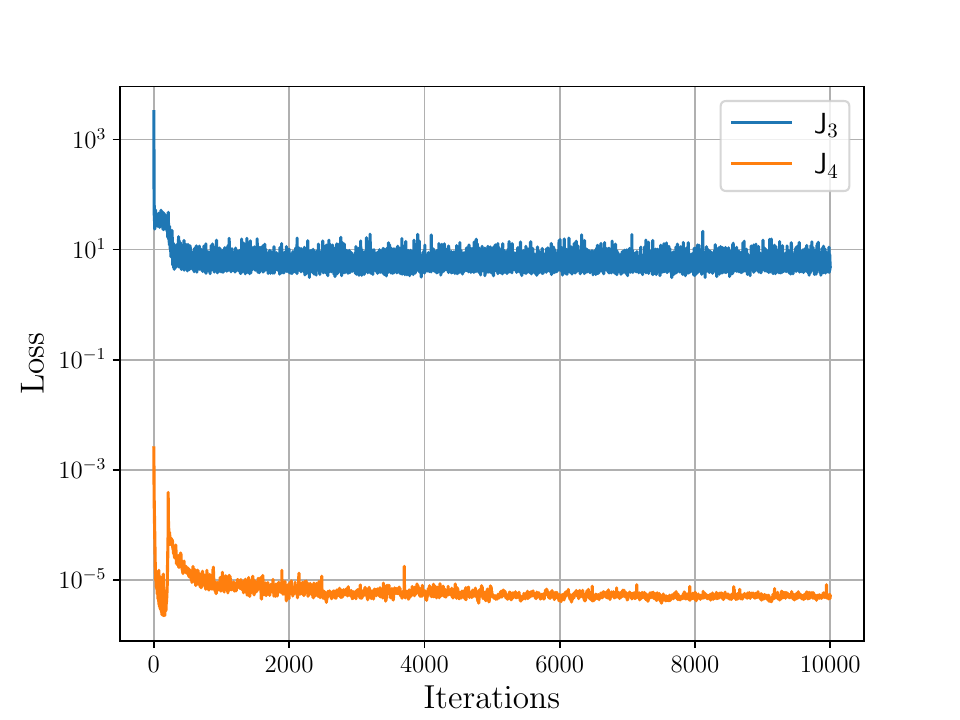}}
    \caption{(a)-(b) Convergence and loss for $\alpha=\{\nu, \ell\}$ on the 2D Darcy problem for the lognormal prior. (b)-(c) Convergence of combinations of prior parameters $\alpha=\{\nu, \ell\}$ estimated jointly with $\phi$ for 2D Darcy on the lognormal prior with 10 $\phi$ updates in the lower-level optimization for every $\alpha$ step.}
    \label{fig:2DDarcy_convergence_prior_WN}
\end{figure}
\subsubsection{Identifiable Setting}\label{sssec:521}

We use $N=1000$ data $y$ each of $d_y=50$ noisy pointwise observations of the solution with $\Gamma = 0.01^2\I$ observational noise covariance. The prior KL expansion is truncated at 20 terms in each dimension for a total of 400 expansion terms. We set $N_s=100$. The true parameters for data generation are $\nu=1.5, \ell=0.5$.
We use a finite element mesh of $100\times100$ nodes. The regularizer $h(\alpha)$ in \ref{eq:ideal_loss3} has means $m_{h, \nu}=\log(3.5)$ and $m_{h, \ell}=\log(1)$, with standard-deviations $\sigma_h=2$.
In Fig.~\ref{fig:2DDarcy_convergence_prior_WN}(a, b) we show the convergence plots for the prior only learning. 
The relative error on parameter estimation is $1.26\%$ and $0.66\%$ on $\nu$ and $\ell$ respectively. The  runtime is of 4.26 mins.

We then test the joint estimation of $\alpha$ and $\phi$ with \ref{eq:ideal_loss3}, \ref{eq:ideal_loss4}.
 The data setup, prior, and regularizers are the same as in the $\alpha$-only case described
 in Subsection \ref{sssec:512} with $N_r=20$, $N_s=100$. The operator setup is the same as in Subsection \ref{sssec:514}.
In Fig.~\ref{fig:2DDarcy_convergence_prior_WN}(c, d) we show the joint prior learning with operator learning.  The achieved relative error for the learned $F^\phi$ on $z\sim\mu^\dagger$ is $0.128\%$.
The relative error on parameter estimation is $0.44\%$ and $3.13\%$ on $\nu$ and $\ell$ respectively. The  runtime is of $112$ mins for the 10k iterations.

\subsubsection{Unidentifiable Setting}
\label{sssec:522}
\begin{figure}
    \centering
    \subfigure[$\alpha$ convergence]{\includegraphics[width=0.48\linewidth]{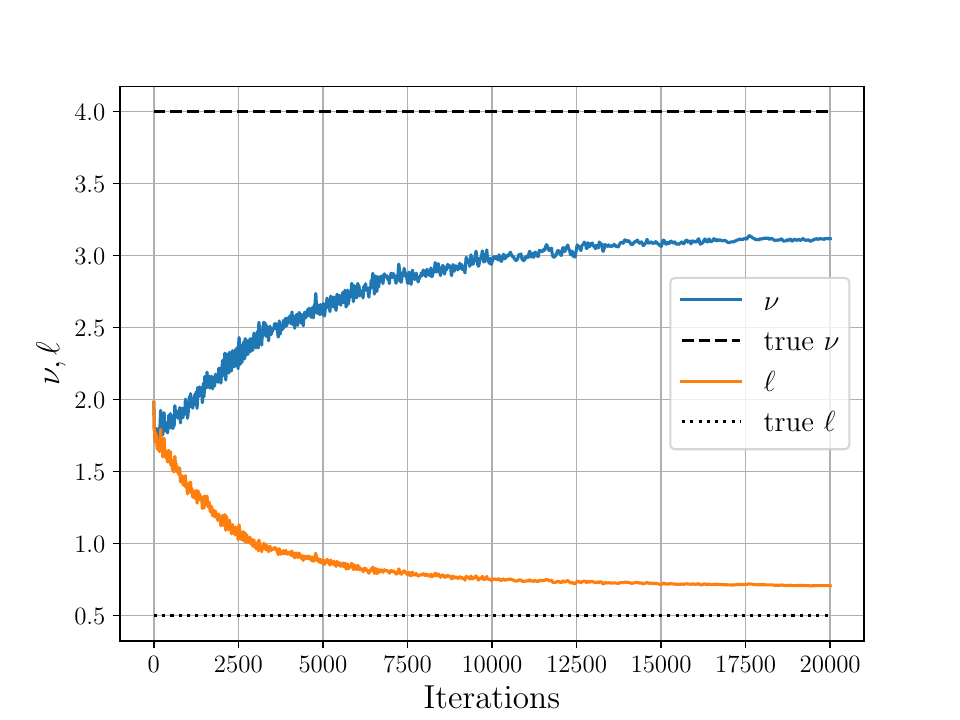}}
    \subfigure[Loss]{\includegraphics[width=0.48\linewidth]{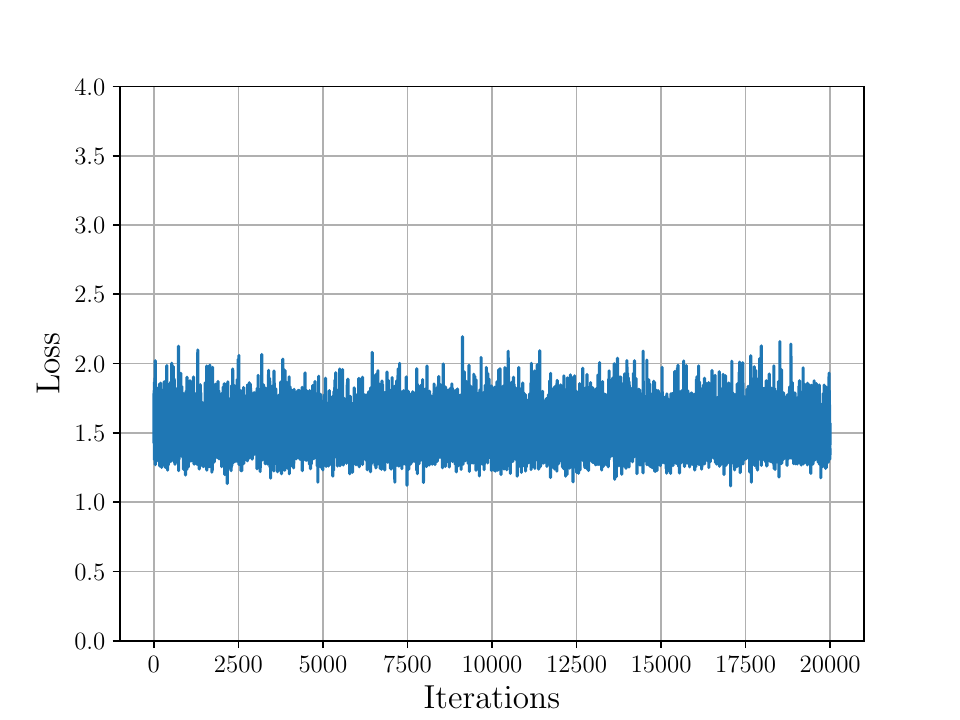}}
    \\
    \subfigure[$\alpha$ convergence with $\phi^\star(\alpha)$]{\includegraphics[width=0.48\linewidth]{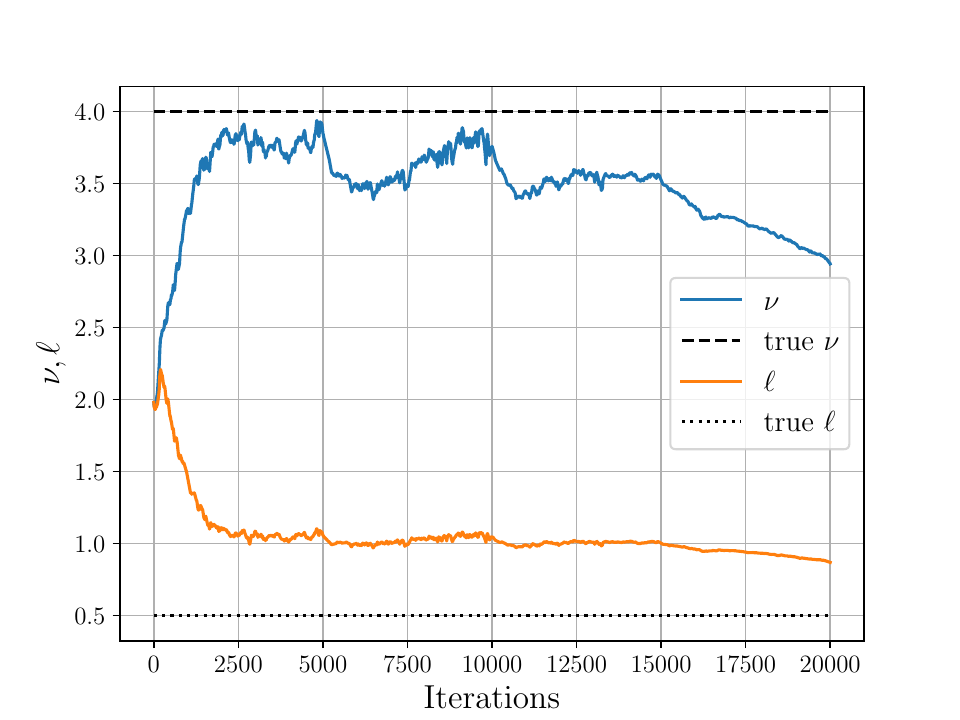}}
    \subfigure[Losses]{\includegraphics[width=0.48\linewidth]{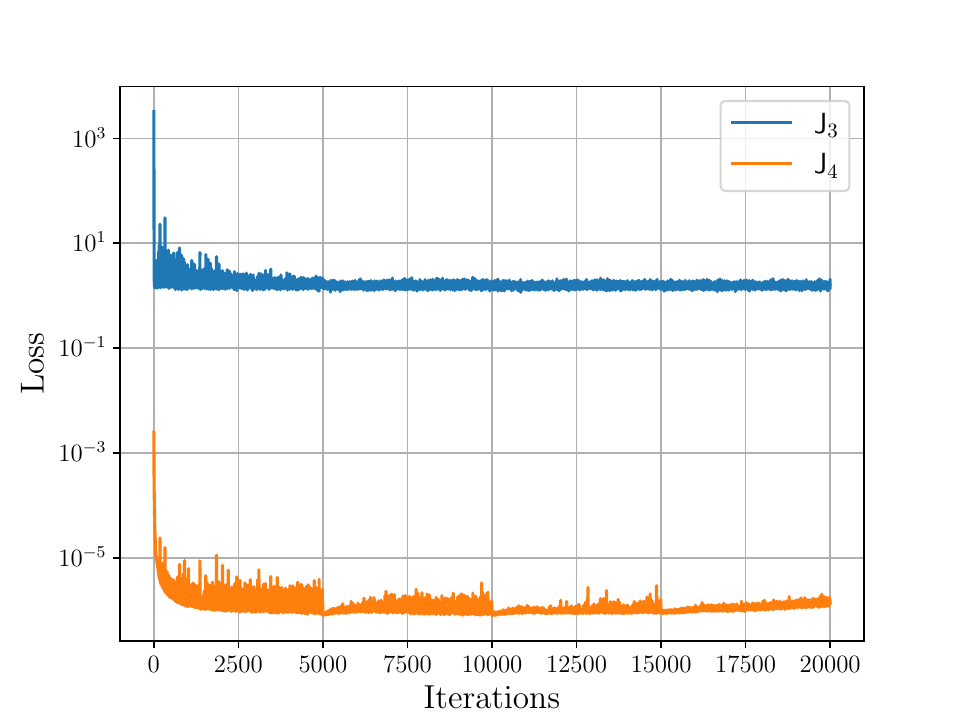}}
    \caption{(a)-(b) Convergence  and loss for $\alpha$ learning. (c)-(d) Convergence  and losses for $\alpha$ and $\phi$ learning. Both sets of figures are for the 2D Darcy on log normal prior for unidentifiable $\alpha$ parameter values.}
    \label{fig:2DDarcy_convergence_prior_op_WN_unident}
\end{figure}
For certain parameter values of $\alpha$, from which the true data is generated, there can be a lack of identifiability. In the previous section, the data was generated from a relatively rough random field with a relatively long length-scale. Hence the smoothness and length-scale parameters each have a distinct impact on the prior's spectrum. We will now look at a smooth random field setting $\nu=4$, keeping $\ell=0.5$. Now, both parameters will have a similar influence on the spectrum of the prior. Other parameters for the data, solver/operator, and regularizer are kept the same as in Subsection \ref{sssec:521}. In Fig.~\ref{fig:2DDarcy_convergence_prior_op_WN_unident} we show the convergence of the $\alpha$ parameters for the unidentifiable parameter regime for the $\alpha$ only learning and the joint $\alpha, \phi$ prior and operator learning tasks. We can see the lack of identifiability causes more issues in the operator learning as it struggles to gain any inferential traction. For these examples, it was necessary to run the learning models for more iterations (20k) and use a 6-time decaying learning rate on $\alpha$ (as opposed to the usual 4 in previous examples). Furthermore, due to training instability, the $\phi$ parameters had to be trained with AMSGrad~\cite{reddi2019convergence}(with the same settings) instead of Adam. As we can see from the Fig.~\ref{fig:2DDarcy_convergence_prior_op_WN_unident}, the proposed methodology applied to this particular unidentifiable setup does not fail silently, the $\alpha$ only learning has a very long convergence time, and the $\alpha$ and $\phi$ learning does not reach equilibrium. Practitioners may also find the method is more sensitive to initializations in such unidentifiable settings. Other possible avenues to identifying non-uniqueness in prior parameter estimation include assessing the sensitivity of the estimated parameters on the choice of regularizer mean and standard-deviation, or the use of hierarchical parametric priors where a distribution on $\alpha$ is learned. There, a wide (or multimodal) inferred distribution on $\alpha$ could indicate non-identifiability of the prior parameters as well as the covariance between entangled prior parameters.

\section{Conclusions}
\label{sec:C}
In this work we propose a novel methodology for learning a generative model for a prior on function
space, based on indirect and noisy observations defined through solution of a PDE. The learning scheme is based on the minimization of a loss functional computing divergences at a distributional level. We prove our methodology recovers the Bayesian posterior when observations originate from a single physical system ($N=1$). We demonstrate the accuracy of our methodology on a series of example pertinent for practitioners. These are 1D and 2D steady-state Darcy flown problems for two different parametric priors. Namely, a level set prior and a lognormal prior. Furthermore, we show how the proposed framework can be augmented to jointly learn an operator, mapping samples of PDE parameter fields drawn from the estimated prior, and the solution space of the PDE, through a bilevel optimization strategy. The operator learning takes place on-the-fly during the
optimization of the prior parameters, and is residual-based. Finally, we indicate the possible pitfalls of unidentifiable priors and show the emergent behavior of the methodology under this setting. A selection of avenues for future work include, the use of different metrics on the space of measures, testing on different PDE systems (nonlinear, coupled, time-evolving etc.), other choices of parametrized prior measures, studying identifiability and non-uniqueness of inferred prior parameters, and testing the downstream use of such learned priors and operators as in Bayesian inversion and generative modeling of physical systems. 

\appendix
\section{Proofs}
\subsection{Proof of Lemma~\ref{lemma:pushW}}\label{app:sec:pushW}
\begin{proof}
Let $\nu,\mu$ denote two probability measures defined on a common measure space 
    $(X, \Sigma).$ Let $\Pi(\nu, \mu)$ denote the set of all coupling probablity measures $\gamma$, on the product space $X\times X$, such that $\gamma(A \times X)= \nu(A)$, $\gamma(X \times A)= \mu(A)$ for all $A\in \Sigma$. Remembering $P_\B(\cdot)=\B^{-1/2}\,\cdot\,$, now recall the definition
    \begin{align}
        \mathsf{W}^2_{2, \B}(\nu, \mu) = \inf_{\gamma \in \Pi(\nu, \mu)} \int_{\Rb^d \times \Rb^d} \|x - y\|_{\B}^2 \md \gamma(x, y).
    \end{align}
The set of all $\gamma' \in \Pi(P_{\B\#}\nu, P_{\B\#}\mu)$ is equivalent to the set
    $(P_{\B}\otimes P_\B)_\#\gamma$ defined over all $\gamma\in\Pi(\nu, \mu)$.
Thus we have
    \begin{align}
    \mathsf{W}^2_{2}(P_{\B\#}\nu, P_{\B\#}\mu) &= \inf_{\gamma' \in \Pi(P_{\B\#}\nu, P_{\B\#}\mu)} \int_{\Rb^d \times \Rb^d} \|x' - y'\|_{2}^2\, \md \gamma'(x', y')\\
        &= \inf_{\gamma \in \Pi(\nu, \mu)} \int_{\Rb^d \times \Rb^d} \|x - y\|_{2}^2 \,\md((P_\B\otimes P_\B)_\#\gamma(x, y))\nonumber\\
        &= \inf_{\gamma \in \Pi(\nu, \mu)} \int_{\Rb^d \times \Rb^d} \|\B^{-\frac{1}{2}}x - \B^{-\frac{1}{2}}y\|_{2}^2 \,\md \gamma(x, y)\nonumber\\
        &= \inf_{\gamma \in \Pi(\nu, \mu)} \int_{\Rb^d \times \Rb^d} \|x - y\|_{\B}^2 \,\md \gamma(x, y)\nonumber\\
        &=\mathsf{W}^2_{2, \B}(\nu, \mu).
    \end{align}
    
\end{proof}
\clearpage
\subsection{Proof of Lemma~\ref{lemma:weightedW}}\label{app:sec:weightedW}
\begin{proof}
To show the desired result we determine a point at which the infimum over couplings $\Pi(\delta_y, \mu)$
is achieved. It is known from~\cite{santambrogio2015optimal}[Section \S 1.4] that when one of the measures in the argument of a Kantorovich problem (KP) (of which Wasserstein metrics are a special case) is a Dirac, i.e. $\delta_y$ for any $y\in\bR^d$, then the set of couplings $\Pi(\delta_y, \mu)$ with marginals $\delta_y$, $\mu$, contains a single element, namely $\delta_y \otimes \mu$. It follows that
\begin{align}
\mathsf{W}^2_{2, \Gamma}(\delta_y, \mu) &= \inf_{\gamma \in \Pi(\delta_y, \mu)} \int_{\Rb^d \times \Rb^d} \|y' - x\|_\Gamma^2 \mathrm{d}\gamma(y', x) \nonumber \\
&= \int_{\Rb^d \times \Rb^d} \|  y' -  x\|_{\Gamma}^2 (\md \delta_y(y') \otimes \md\mu(x)) \nonumber \\
&= \Eb_{x \sim \mu} \|y-x\|_{\Gamma}^2.
\end{align}
\end{proof}

\subsection{Proof of Lemma \ref{lemma:w_sw_equiv}}\label{app:w_sw_equiv}
\begin{proof}
    We want to show, for $y\in\Rb^d$ and $\mu \in \mathcal{P}(\Rb^d)$,
\begin{align*}
    \mathsf{SW}_{2,\Gamma}^2(\delta_{y}, \mu)=\frac{1}{d}  \mathsf{W}_{2,\Gamma}^2(\delta_{y}, \mu).
\end{align*}
Thus, we have
\begin{subequations}
\begin{align}
    \mathsf{SW}^2_{2, \Gamma}(\delta_y, \mu) &= \int_{\Sb^{d-1}}\mathsf{W}_2^2(P^\theta_{\Gamma\#}\delta_y, P^\theta_{\Gamma\#}\mu)\md \theta \nonumber\\
    &=\int_{\Sb^{d-1}}\mathsf{W}_2^2(\delta_{\langle\Gamma^{-1/2}y, \theta\rangle}, P^\theta_{\Gamma\#}\mu)\md \theta \nonumber\\
    &=\int_{\Sb^{d-1}} \Eb_{z\sim\mu}( \langle\Gamma^{-1/2}y, \theta\rangle -  \langle\Gamma^{-1/2} z, \theta\rangle )^2\md \theta \label{eq:pre_swap} \\
    &= \Eb_{z\sim\mu} \int_{\Sb^{d-1}}( \langle\Gamma^{-1/2}y, \theta\rangle -  \langle\Gamma^{-1/2} z, \theta\rangle )^2\md \theta \label{eq:swapped_integral} \\
    &= \Eb_{z\sim\mu} \int_{\Sb^{d-1}}\bigl( \langle\Gamma^{-1/2}(y-z), \theta\rangle \bigr)^2\md \theta \nonumber,
\end{align}
\end{subequations}
where, in order to move from \eqref{eq:pre_swap} to \eqref{eq:swapped_integral}, we use the fact that  $\bE_{z\sim\mu} \|z\|_{\Gamma}^2 < \infty$. Furthermore, 
\begin{align*}
    \langle \Gamma^{-1/2} a, \theta\rangle\langle\Gamma^{-1/2}b, \theta\rangle = (\Gamma^{-1/2} a)^\top \theta\theta^\top (\Gamma^{-1/2} b),
\end{align*}
and noting, from~\cite{vershynin2018high}[Section \S 3.3.1], the covariance of the uniform on the sphere, $\mathrm{ Cov \, Unif}(\Sb^{d-1})$ $= \int_{\Sb^{d-1}}\theta\theta^\top\md \theta =  \frac{1}{d}\I$.
Then
\begin{align*}
      \int_{\Sb^{d-1}} \langle \Gamma^{-1/2} a, \theta\rangle \langle \Gamma^{-1/2} b , \theta\rangle\md \theta &= \frac{1}{d}(\Gamma^{-1/2}a)^\top (\Gamma^{-1/2} b)\\
      &=\frac{1}{d}\langle a, b \rangle_\Gamma.
\end{align*}
In particular 
\begin{align*}
      \int_{\Sb^{d-1}} \bigl(\langle \Gamma^{-1/2} (y-z), \theta\rangle\bigr)^2 \md \theta
      &=\frac{1}{d}\|y-z\|^2_\Gamma.
\end{align*}
And so, from (\ref{eq:swapped_integral}) and Lemma \ref{lemma:weightedW} we obtain
\begin{align*}
    \mathsf{SW}^2_{2, \Gamma}(\delta_y, \mu) 
    &=\frac{1}{d}\Eb_{z\sim\mu}\|y-z\|^2_\Gamma\\
    &=\frac{1}{d}\mathsf{W}^2_{2, \Gamma}(\delta_y, \mu).
\end{align*}
\end{proof}

\subsection{Proof of Lemma \ref{lemma:gradient_noise_conv}}\label{proof:gradient_noise_conv}
\begin{proof}
We have
\begin{align*}
    \sI(\eta * \mu)&=\bE_{x' \sim \eta*\mu}\|y-x'\|_\Gamma^2,\\
    \sI(\mu)&=\bE_{x \sim \mu}\|y-x\|_\Gamma^2.
\end{align*}
Thus
\begin{align*}
    \sI(\eta * \mu)&=\bE_{(x,\varepsilon) \sim \mu \otimes \eta}\|y-x-\varepsilon\|_\Gamma^2,\\
   &=\bE_{(x,\varepsilon) \sim \mu \otimes \eta}\Bigl(\|y-x\|_\Gamma^2-2\langle y-x,\varepsilon\rangle_\Gamma+\|\varepsilon\|_\Gamma^2\Bigr),\\
   &=\bE_{x \sim \mu} \|y-x\|_\Gamma^2+ \bE_{\varepsilon \sim \eta}\|\varepsilon\|_\Gamma^2,\\
   &=\sI(\mu)+\bE_{\varepsilon \sim \eta}\|\varepsilon\|_\Gamma^2,
\end{align*}
using that $\eta$ is centred, independent of $\mu$ and has finite second moments.

\end{proof}

\section{Weak Form Residuals with Array Shifting}\label{app:weak_form_residuals}
We now show how to compute a residual in the weak form for a Darcy problem on domain $D$, with homogenous boundary conditions using array shifting. Taking~\eqref{eq:poisson_bc}
and testing the domain residual against as set of test functions $\{v_i\}_{i=1}^{d_o}$, we obtain the discretized variational formulation
\begin{align}
    \cO(R(z,u))_i = \langle v_i, R(z, u)  \rangle= \int_D v_i (\nabla\cdot(z\nabla u))\md x + \int_D v_i f \md x= 0, \forall v_i \in V.\nonumber
\end{align}
Through integration by parts we obtain the weak form. Here, we use as a shorthand $\r_i =  \cO(R(z,u))_i$. In 2D
\begin{align}
    \label{eq:discretized_residual}
    \r_i &= -\int_D z \partial_{x_{(1)}} v_i\,\partial_{x_{(1)}} u\,\md x -\int_D z \partial_{x_{(2)}} v_i\,\partial_{x_{(2)}} u \,\md x + \int_D v_i f \,\md x.
\end{align}
\begin{figure}[]
    \centering
    \def\l{2.4}
    \begin{tikzpicture}
      % Draw the 9 dots in a square
      \foreach \j in {0,1,2}
        \foreach \k in {0,1,2}
          \fill (\l*\j,\l*\k) circle (3pt);
      
      % Draw lines connecting the dots (excluding diagonals)
      \draw (\l*0,\l*0) -- (\l*0,\l*1);
      \draw (\l*0,\l*0) -- (\l*0,\l*2);
      \draw (\l*0,\l*0) -- (\l*1,\l*0);
      \draw (\l*0,\l*2) -- (\l*1,\l*2);
      \draw (\l*0,\l*1) -- (\l*1,\l*1);
      \draw (\l*1,\l*0) -- (\l*1,\l*1);
      \draw (\l*1,\l*2) -- (\l*1,\l*1);
      \draw (\l*2,\l*0) -- (\l*2,\l*1);
      \draw (\l*2,\l*0) -- (\l*2,\l*2);
      \draw (\l*2,\l*2) -- (\l*2,\l*1);
      \draw (\l*0,\l*2) -- (\l*0,\l*1);
      \draw (\l*1,\l*0) -- (\l*2,\l*0);
      \draw (\l*1,\l*1) -- (\l*2,\l*1);
      \draw (\l*1,\l*2) -- (\l*2,\l*2);
      \draw (\l*1,\l*0) -- (\l*0,\l*1);
      \draw (\l*2,\l*0) -- (\l*1,\l*1);
      \draw (\l*1,\l*1) -- (\l*0,\l*2);
      \draw (\l*2,\l*1) -- (\l*1,\l*2);

      % Label the central dot with "jk"
      \node at (\l*1.2,\l*1.2) {$j,k$};
      \node at (-\l*0.4,\l*1.) {$j,k_{-1}$};
      \node at (\l*2.4,\l*1.) {$j,k_{+1}$};

      \node at (\l*1.0,\l*-0.2) {$j_{+1},k$};
      \node at (\l*2.0,\l*-0.2) {$j_{+1},k_{+1}$};

      \node at (\l*0.0,\l*2.2) {$j_{-1},k_{-1}$};
      \node at (\l*1.0,\l*2.2) {$j_{-1},k_{}$};

     \node at (\l*0.2,\l*1.2) {$1$}; 
     \node at (\l*0.8,\l*0.8) {$2$};
     \node at (\l*1.2,\l*0.2) {$3$};
     \node at (\l*1.8,\l*0.8) {$4$};
     \node at (\l*1.4,\l*1.4) {$5$};
     \node at (\l*0.8,\l*1.8) {$6$};

    \end{tikzpicture}
    \caption{2D FEM mesh centred at node $jk$. We assume equal spacing, $h$, of the nodes.}
    \label{fig:2DMeshNodes}
\end{figure}
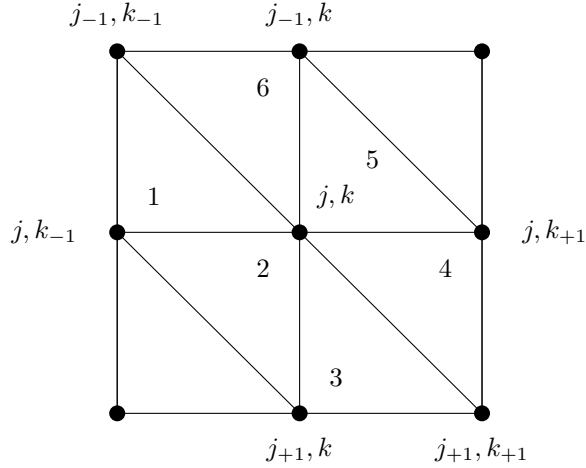
Traditional finite element solvers would assemble a system of sparse linear equations of the form $\mathsf{Au=f}$ where $\mathsf{A}\in\Rb^{d_o\times d_o}$ is sparse, and $\mathsf{u, f}\in\Rb^{d_o}$. The matrix vector product $\mathsf{Au}$ is comprised of the first two terms in~\eqref{eq:discretized_residual} and $\mathsf{f}$ is the third term. 
To compute residuals using array shifting in 2D we use double indexing, denoted by $jk$, to represent the mesh nodes as shown in Fig.~\ref{fig:2DMeshNodes}. We denote the set of indices $\{i\}_{i=1}^{d_{o}} = \mathsf{ravel}(\{j, k\}_{j,k=1}^{j, k=\sqrt{d_o}})$.
Then
\begin{align}
\r_{jk} = - \r_{x_{(1)}, jk} - \r_{x_{(2)}, jk} + \r_{f, jk},\nonumber
\end{align}
where each of the three terms corresponds  the ones in~\eqref{eq:discretized_residual}.
Assuming $z$ is given at the nodes and is piece-wise constant from the top left of an element, we have 
\begin{align}
    \r_{x_{(1)}, jk}=& \frac{1}{2} ( ( z_{j_{-1}, k_{-1}}+z_{j_{}, k_{-1}})(u_{j_{},k_{}}-u_{j_{},k_{-1}}) - (z_{j_{}, k_{}}+z_{j_{-1}, k_{}})(u_{j_{},k_{+1}}-u_{j_{},k_{}})),\nonumber\\
     \r_{x_{(2)}, jk}=& \frac{1}{2} ( (z_{j_{}, k_{-1}}+z_{j_{}, k_{}} )(u_{j_{},k_{}} - u_{j_{+1},k_{}}) - (z_{j_{-1}, k_{}}+z_{j_{-1}, k_{-1}})(u_{j_{-1},k_{}} - u_{j_{},k_{}}) ).\nonumber
\end{align}
We compute the tested inhomogeneous term as $\int_D vf\,\md x=\sum_{e=1}^6\int_{D_e}vf\,\md x$ integrating using 1 point Gauss integration at $(1/3h, 1/3h)$. 
Thus
\begin{align}
    \r_{f, jk} = \frac{h^2}{9}\left( 3 f_{j,k} +  f_{j_{-1},k_{-1}}
    + f_{j_{},k_{-1}}+ f_{j_{+1},k_{}}+ f_{j_{+1},k_{+1}}+ f_{j_{},k_{+1}}+ f_{j_{-1},k_{}} \right).\nonumber
\end{align} 
All operations can be rapidly computed using array shifting. 
To solve the linear differential equation we use an iterative linear solver such as conjugate gradient (CG), for positive definite matrices $\mathsf{A}$, or GMRES for more general problems. To solve nonlinear differential equations, one would use Newton's method. The Newton update step can be seen as solving a linear system with a Jacobian-vector product $\mathsf{u_{n+1}} = \mathsf{{u}_n - {\mathrm{J}_\cO(u_n)}^{-1}\cO(u_n) }$.
Deep learning libraries are designed to compute Jacobian-vector products with high efficiency, resulting in a GPU efficient Newton-Krilov method. 

\bibliographystyle{siamplain}
\bibliography{biblio}

\end{document}